\documentclass[a4paper,fleqn]{cas-sc}

\usepackage[authoryear,longnamesfirst]{natbib}
\usepackage{graphicx} 
\usepackage{bm}
\def\d{\mbox{d}}
\def\E{\mbox{E}}
\usepackage{placeins}
\usepackage{float}
\usepackage{amsfonts, amsmath, amssymb, amsthm}
\newtheorem{assumption}{Assumption}[section]
\newtheorem{theorem}{Theorem}[section]
\usepackage{hyperref}
\newtheorem{example}{Example}
\newtheorem{corollary}{Corollary}[section]
\usepackage{mathtools}
\usepackage{booktabs}
\usepackage{stfloats}

\newtheorem{definition}{Definition}%
\def\tsc#1{\csdef{#1}{\textsc{\lowercase{#1}}\xspace}}
\tsc{WGM}
\tsc{QE}

\begin{document}
\let\WriteBookmarks\relax
\def\floatpagepagefraction{1}
\def\textpagefraction{.001}

\shorttitle{Hierarchical RBF-KAN and RBF-SKAN Architectures}    

\shortauthors{M. Xia and Q. Shen}  

\title [mode = title]{Hierarchical RBF-KAN and RBF-SKAN Architectures for Multidimensional Function Approximation and Random Field Learning}  



%

\author[1,2]{Mingtao Xia}[orcid=0000-0002-2116-4712]

\cormark[1]


\ead{mxia4@uh.edu;xiamingtao97@g.ucla.edu}

\ead[url]{https://sites.google.com/nyu.edu/mingtao-xia/home}

\credit{Writing – review \& editing, Writing – original draft, Visualization, Validation, Supervision, Software, Project administration, Methodology, Investigation, Formal analysis, Conceptualization}

\affiliation[1]{organization={University of Houston},
            addressline={Philip Guthrie Hoffman Hall, 3551 Cullen Blvd}, 
            city={Houston},
            postcode={77204}, 
            state={Texas},
            country={United States of America}}

\affiliation[2]{organization={University of Birmingham},
            addressline={Watson Building}, 
            city={Birmingham},
            postcode={B15 2TT}, 
            state={},
            country={United Kingdom}}

\author[3]{Qijing Shen}[orcid=0009-0009-6685-0861]

\fnmark[2]

\ead{qijing.shen@ndm.ox.ac.uk}

\ead[url]{https://www.chg.ox.ac.uk/people/qijing-shen}

\credit{Writing – review \& editing, Visualization, Software, Methodology, Investigation}

\affiliation[3]{organization={University of Oxford},
            addressline={Henry Wellcome Building for Molecular Physiology, Old Road}, 
            city={Oxford},
            postcode={ OX3 7BN}, 
            state={Oxfordshire},
            country={United Kingdom}}

\cortext[1]{Corresponding author}



\begin{abstract}
In this manuscript, we propose and analyze hierarchical Kolmogorov--Arnold neural network architectures employing radial basis functions as activation functions for approximating deterministic functions and random field models. Specifically, we develop a hierarchical radial-basis-function Kolmogorov--Arnold network (hierarchical RBF-KAN) for multidimensional deterministic function approximation and a hierarchical radial-basis-function stochastic Kolmogorov--Arnold network (hierarchical RBF-SKAN) for random field learning. 
From a theoretical perspective, we establish universal approximation results for both architectures. In particular, we derive quantitative approximation estimates for the hierarchical RBF-KAN, showing that the proposed framework has the potential to partially alleviate the curse of dimensionality in learning high-dimensional functions by reducing the effective dimensionality of the approximation problem. Furthermore, we show that the hierarchical RBF-SKAN can approximate random field models under the Wasserstein-2 metric. Empirically, we show that our proposed radial-basis-function-based neural network structure could effectively learn multivariate functions and random field models. 
\end{abstract}


\begin{highlights}
\item We propose hierarchical RBF-KAN and hierarchical RBF-SKAN architectures for learning multidimensional functions and random fields, respectively. The proposed hierarchical layer design improves approximation accuracy over existing RBF-KAN and multilayer RBF neural network architectures, and our hierarchical RBF-SKAN framework outperforms prevailing uncertainty quantification approaches such as CNF and CVAE models.
\item We establish universal approximation results for the proposed architectures, showing that the hierarchical RBF-KAN can partially alleviate the curse of dimensionality in multivariate function approximation, while the hierarchical RBF-SKAN possesses universal approximation capability for random field learning.
\item We validate the proposed methods through extensive numerical experiments on multidimensional function approximation, chaotic dynamical system reconstruction, and random field learning tasks. We also demonstrate that incorporating ResNet techniques into multi-block hierarchical RBF-KAN architectures significantly improves approximation accuracy and training performance.
\end{highlights}

\begin{keywords}
Radial Basis Function \sep Kolmogorov-Arnold Neural Network \sep Universal Approximation Theorem \sep Stochastic Neural Network
\end{keywords}

\maketitle

\section{Introduction}

Radial basis function neural networks (RBFNNs) are a class of feedforward neural networks that employ radial basis functions (RBFs) as activation units in the hidden layer, typically combined with a linear output layer. An RBFNN approximates a target function through a weighted superposition of RBFs centered at selected points in the input space, enabling efficient representations of multivariate nonlinear mappings with strong approximation guarantees. Owing to their localized activation structure, RBFNNs often exhibit favorable optimization and generalization properties, including faster convergence, reduced sensitivity to initialization, and improved interpretability compared with fully connected multilayer perceptrons. Since the seminal works establishing their theoretical foundations and practical formulations \citep{BroomheadLowe1988, Powell1987, MoodyDarken1989, poggio1990networks}, RBFNNs have been widely applied to function approximation, scattered data interpolation, system identification, time-series prediction, classification, and the numerical solution of partial differential equations \citep{Buhmann2003, Fasshauer2007}.

From a theoretical perspective, single-layer RBFNNs possess the universal approximation property; namely, they can approximate arbitrary continuous functions on compact domains to any prescribed accuracy under suitable conditions \citep{park1991universal, poggio1990networks, wu2012using}. These results provide the theoretical justification for employing RBF-based neural networks to learn unknown nonlinear mappings. Beyond shallow architectures, increasing attention has recently been devoted to multilayer neural networks equipped with RBF activations \citep{chao2001multilayer, zhao2019multi, jiang2022efficient}, motivated by the empirical success of deep architectures in high-dimensional learning tasks. In parallel, RBF representations have also been incorporated into emerging new neural architectures \cite{chao2026rbf}, including Kolmogorov--Arnold networks and related operator-learning frameworks \citep{liu2024kan}.
Despite the strong empirical performance of RBF-based neural architectures across a broad range of applications, comparatively limited theoretical work has addressed their approximation and expressivity properties for learning complex multivariate functions. In particular, rigorous analyses characterizing how depth and network width influence approximation efficiency and representation power remain relatively scarce. 

In this work, we propose and analyze a hierarchical RBF-based Kolmogorov--Arnold network (hierarchical RBF-KAN) architecture and a hierarchical RBF-based stochastic Kolmogorov--Arnold network (hierarchical RBF-SKAN) for effectively learning multidimensional deterministic functions and random fields, respectively. Unlike previous multilayer RBF neural networks \citep{jiang2022efficient}, our proposed hierarchical RBF-KAN employs a hierarchical architecture rather than standard fully connected layers. Moreover, the proposed framework extends recent RBF-KAN models \citep{li2024kolmogorov,chao2026rbf} by allowing distinct numbers of neurons across different layers with a hierarchical architecture, aiming to more faithfully reproduce the Kolmogorov--Arnold representation structure. As demonstrated both theoretically and empirically, this hierarchical architecture is essential for effectively learning multivariate functions. From a theoretical perspective, we establish quantitative approximation results that relate the number of neurons required to achieve a prescribed approximation accuracy. These results provide insight into how the proposed hierarchical RBF neural network architecture can partially alleviate the curse of dimensionality as the dimensionality of the input increases. 


The main contributions of this manuscript are summarized as follows:
\begin{itemize}
\item We propose hierarchical RBF-KAN and hierarchical RBF-SKAN architectures for efficiently learning multidimensional functions and random fields. Specifically, the proposed RBF-KAN framework generalizes existing RBF-KAN models by introducing hierarchical structures across different network layers, inspired by the Kolmogorov--Arnold representation theorem. This hierarchical design substantially improves the approximation accuracy for multivariate functions compared with existing RBF-KAN and multilayer RBF neural network architectures.
For random field reconstruction, the proposed RBF-SKAN framework demonstrates superior performance relative to several prevailing machine-learning-based uncertainty quantification approaches, including the conditional normalizing flow (CNF) and conditional variational autoencoder (CVAE) frameworks.
\item From a theoretical perspective, we establish approximation results for the proposed RBF neural network architectures. In particular, we prove a universal approximation theorem showing that the proposed hierarchical RBF-KAN can partially alleviate the curse of dimensionality
in approximating multivariate functions. We further establish the universal approximation capability of the proposed hierarchical RBF-SKAN architecture for learning random field models.
\item We demonstrate the effectiveness of the proposed RBF neural network architectures through a range of numerical experiments, including multidimensional function approximation, dynamical system reconstruction, and random field learning. Also, we show that incorporating residual-network (ResNet) techniques into multi-block hierarchical RBF-KAN can enhance approximation accuracy and training performance.
\end{itemize}

The remainder of this manuscript is organized as follows.
In Section~\ref{section2}, we introduce the proposed hierarchical RBF-KAN and hierarchical RBF-SKAN architectures for learning deterministic functions and random field models, respectively. Specifically, we analyze their universal approximation properties and discuss how our proposed hierarchical RBF-KAN framework can partially alleviate the ``curse of dimensionality'' in approximating multidimensional functions.
In Section~\ref{section3}, we present a series of numerical experiments to demonstrate the effectiveness of the proposed hierarchical RBF-KAN and hierarchical RBF-SKAN and compare their performance with several existing RBF-based neural networks.
Finally, in Section~\ref{section4}, we summarize the main findings and outline potential directions for future research.



\section{Universal approximation ability of our proposed hierarchical RBF-KAN and hierarchical RBF-SKAN}
\label{section2}

In this section, we introduce the proposed hierarchical RBF-KAN and RBF-SKAN architectures and analyze their approximation properties. In particular, we establish universal approximation results for efficiently representing both multivariate deterministic functions and random fields using the proposed RBF-KAN and RBF-SKAN frameworks. 
Specifically, for deterministic function approximation, unlike recent KAN results that primarily establish asymptotic universal approximation properties \citep{chiu2026free}, we derive quantitative approximation estimates that explicitly relate the approximation error to the number of neurons in each layer. These results show that, for a broad class of multivariate functions, there exists a hierarchical RBF-KAN architecture capable of reducing the effective dimensionality of the approximation problem, thereby partially alleviating the curse of dimensionality.


\subsection{Universal approximation theorem of the hierarchical RBF-KAN to learn multivariate deterministic functions}
\label{subsec2:1}
First, we consider the problem of learning a multivariate function 
\begin{equation}
\bm{y}(\bm{x}) = u(\bm{x}), 
\qquad \bm{x} \in \Omega \subseteq \mathbb{R}^d.
\label{deterministic_model}
\end{equation}
We propose a hierarchical RBF-KAN, illustrated in Fig.~\ref{fig:rbfnn}, to approximate Eq.~\eqref{deterministic_model}. One primary distinction between the proposed hierarchical RBF-KAN and existing RBF-KAN architectures, such as the model in \cite{chao2026rbf}, is that the proposed hierarchical RBF-KAN employs hierarchical ``blocks'' consisting of two distinct activation layers, rather than fully connected hidden layers with a uniform number of neurons throughout the network. Within one block, the number of neurons in the first post-activation layer (\(n_2\)) is chosen to be \((2d+1)\) times the number of neurons in the second post-activation layer, thereby more faithfully reproducing the Kolmogorov--Arnold representation described in Eq.~\eqref{KAN_expression}.
 Throughout this work, we use the following Gaussian kernel as the RBF activation function for each neuron:
\begin{equation}
B(x) \coloneqq \exp\!\left(-x^2\right).
\label{Gaussian_kernel}
\end{equation}

    \begin{figure}
    \centering
\includegraphics[width=0.9\linewidth]{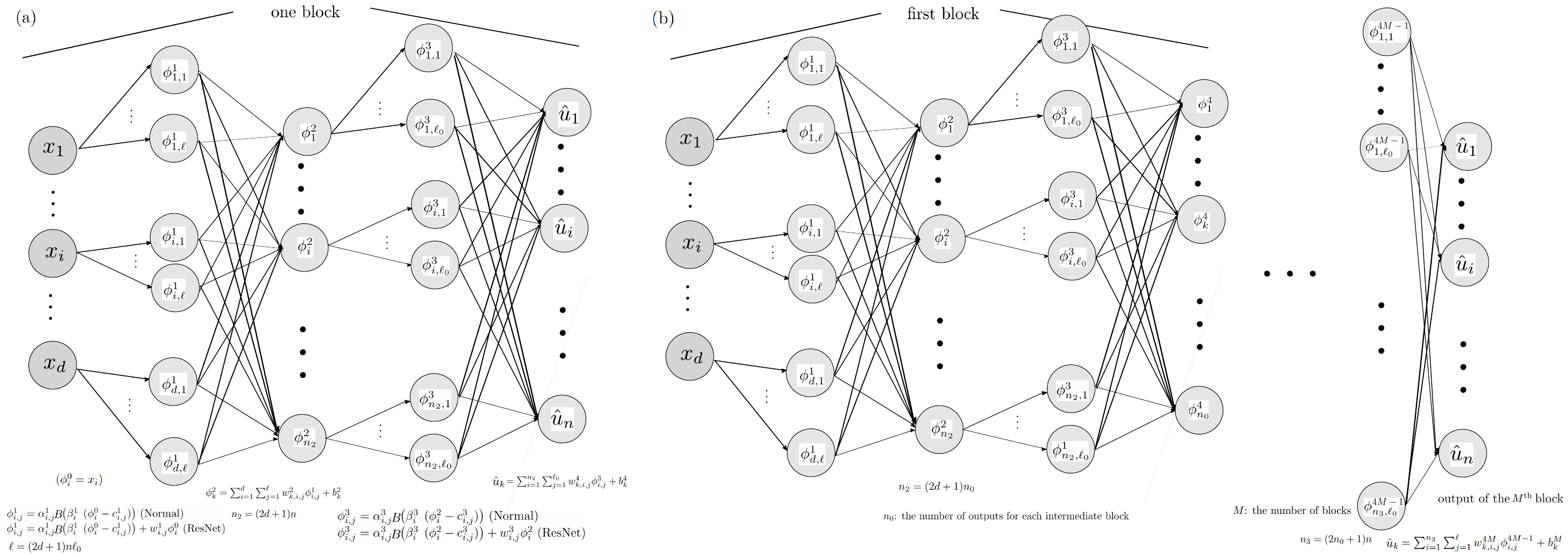}
    \caption{
    The structure of the proposed hierarchical RBF-KAN is illustrated in Fig.~\ref{fig:rbfnn}. Panel (a) shows a single-block hierarchical RBF-KAN, while panel (b) presents a multi-block hierarchical RBF-KAN architecture. Note that, in the illustration of the single-block hierarchical RBF-KAN shown in panel~(a), each $\phi_{i,j}^1$ depends only on the input variable $x_i$, while each $\phi_{i,j}^3$ depends only on $\phi_i^2$. Consequently, these two activation layers are not fully connected (dense) layers. 
    Each block of the hierarchical RBF-KAN consists of two activation layers, where the activation function $B$ is the Gaussian kernel defined in Eq.~\eqref{Gaussian_kernel}. Before applying the Gaussian activation function, each input variable is replicated $\ell$ times. 
In panel (a), the quantity $\phi_{\lfloor (i-1)/d \rfloor}^2$ is designed to approximate the input of 
$\Phi_{i-\lfloor (i-1)/d \rfloor(2d+1)}$,
namely, $\sum_{q=1}^d
\phi_{i-\lfloor (i-1)/d \rfloor(2d+1),q}(x_q)$,
in the Kolmogorov--Arnold representation given in Eq.~\eqref{KAN_expression} for the $\lfloor (i-1)/d \rfloor^{\text{th}}$ component of the function to be learned. The network may employ either the standard feedforward propagation strategy or a ResNet structure \citep{he2016deep} for forward propagation.}
    \label{fig:rbfnn}
\end{figure}

In Fig.~\ref{fig:rbfnn}, both the scales $ \beta_{i}^k $ and centers $ c_{i,j}^k $ for the RBFs are trainable parameters. Forward propagation may follow either a standard feedforward architecture or a ResNet-type structure.

Next, we shall analyze the approximation ability of our proposed hierarchical RBF-KAN in Fig.~\ref{fig:rbfnn}. For simplicity, we assume:
\begin{equation}
\bm{x}\in \Omega = [-1,1]^d.
\end{equation}
Following \cite{barthelmann2000high}, we define the function space
\begin{equation}
F_k^d 
=
\left\{
u : [-1,1]^d \to \mathbb{R}
\;\middle|\;
D^{\boldsymbol{\alpha}} u \text{ is continuous for all } 
\boldsymbol{\alpha} = (\alpha_1,\dots,\alpha_d) \in \mathbb{N}_0^d
\text{ with } \alpha_i \le k
\right\},
\end{equation}
equipped with the norm
\begin{equation}
\|u\|_{k,\infty}
\coloneqq
\max_{\substack{\boldsymbol{\alpha} \in \mathbb{N}_0^d, \alpha_i \le k}}
\| D^{\boldsymbol{\alpha}} u \|_{\infty},
\end{equation}
which contains functions whose mixed partial derivatives up to order $k$ in each coordinate are continuous and bounded.

For the continuous multivariate function in Eq.~\eqref{deterministic_model}
\begin{equation}
u(\bm{x}) = u(x_1,\dots,x_d),
\end{equation}
the Kolmogorov--Arnold representation theorem states that there exist continuous univariate functions
\begin{equation}
\phi_{q,p} : [-1,1] \to \mathbb{R},
\qquad
\Phi_q : \mathbb{R} \to \mathbb{R},
\end{equation}
such that
\begin{equation}
u(x_1,\dots,x_d)
=
\sum_{q=0}^{2d}
\Phi_q
\!\left(
\sum_{p=1}^{d}
\phi_{q,p}(x_p)
\right).
\label{KAN_expression}
\end{equation}
This representation reduces the approximation of a $d$-dimensional function to the approximation of finitely many univariate functions. Therefore, the multivariate approximation problem can be decomposed into the approximation of univariate functions $ \phi_{q,p} $ and $ \Phi_q $.
Since the domain $[-1,1]$ is compact and all functions involved are continuous, the univariate functions $ \phi_{q,p} $ and $ \Phi_q $ are uniformly continuous. Hence, for any $\varepsilon > 0$, there exist moduli of continuity $\delta_{\phi}(\varepsilon)$ and $\delta_{\Phi}(\varepsilon)$ such that
\begin{equation}
|\phi_{q,p}(x) - \phi_{q,p}(x+\varepsilon)|
\le
\delta_{\phi}(\varepsilon), \,\, \forall q, p
\label{uniform1}
\end{equation}
and
\begin{equation}
|\Phi_q(z) - \Phi_q(z+\varepsilon)|
\le
\delta_{\Phi}(\varepsilon), \,\, \forall q
\label{uniform2}
\end{equation}
with
\begin{equation}
\delta_{\phi_{q, p}}(\varepsilon) \to 0,
\qquad
\delta_{\Phi_q}(\varepsilon) \to 0,
\qquad 
\text{uniformly for all } q, p  ~\text{as}~ \varepsilon \to 0.
\label{uniform3}
\end{equation}
Note that, according to the Kolmogorov--Arnold representation theorem, the inner functions \(\phi_{q,p}\) are universal, whereas the outer functions \(\Phi_q\) depend on the target function \(f\). Consequently, the corresponding modulus of continuity \(\delta_{\Phi_q}\) may also depend on the specific choice of \(f\).
Moreover, the Kolmogorov--Arnold representation in Eq.~\eqref{KAN_expression} implies that an accurate approximation of the univariate component functions \(\phi_{q,p}\) and \(\Phi_q\) leads to an accurate approximation of the full multivariate function \(u\) in Eq.~\eqref{deterministic_model}. In particular, adaptive learning of the RBF centers and scale parameters can significantly improve approximation efficiency \citep{billings2007generalized}. Consequently, it suffices to show that the proposed hierarchical RBF-KAN is capable of approximating arbitrary continuous univariate functions. In the following, we establish a universal approximation result for the proposed hierarchical RBF-KAN and further illustrate how the resulting approximation framework can partially alleviate the curse of dimensionality in multivariate function learning under suitable conditions.

\begin{theorem}
\rm
\label{theorem1}
Let \(u(\bm{x})\), with \(\bm{x}\in\Omega=[-1,1]^d\), be a scalar-valued function that admits the Kolmogorov--Arnold representation in Eq.~\eqref{KAN_expression}. 
 Then, for any $c>0$, there exists a hierarchical RBF-KAN of the form shown in Fig.~\ref{fig:rbfnn}, whose output is denoted by $\hat{u}$, such that
\begin{equation}
\begin{aligned}
    \|u-\hat{u}\|_{\infty}\leq c.
\end{aligned}
\label{thm1_statement}
\end{equation}
\end{theorem}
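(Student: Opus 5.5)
The plan is to approximate each univariate component of the Kolmogorov--Arnold representation in Eq.~\eqref{KAN_expression} by a shallow univariate Gaussian RBF expansion, and then compose these approximations using the uniform continuity estimates \eqref{uniform1}--\eqref{uniform3}.

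\textbf{Step 1: inner functions.} Each $\phi_{q,p}$ is continuous on the compact interval $[-1,1]$. By the universal approximation theorem for single-layer Gaussian RBF networks \citep{park1991universal}, for every $\eta>0$ there are $\ell$ centers $c^1_{j}$, scales $\beta^1_j$ and weights $w_j$ such that
\begin{equation*}
\hat\phi_{q,p}(x)=\sum_{j=1}^{\ell} w_j B\bigl(\beta^1_j(x-c^1_j)\bigr), \qquad \sup_{x\in[-1,1]}|\phi_{q,p}(x)-\hat\phi_{q,p}(x)|\le \eta .
\end{equation*}
This is exactly what the first activation layer of a block computes on the replicated input $x_p$. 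A linear summation then produces $\hat z_q=\sum_p\hat\phi_{q,p}(x_p)$, which is the quantity $\phi^2_q$ in Fig.~\ref{fig:rbfnn}, and it satisfies $|z_q-\hat z_q|\le d\eta$, where $z_q=\sum_p\phi_{q,p}(x_p)$.

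\textbf{Step 2: outer functions.} The exact arguments $z_q$ lie in the compact set $I=[-M,M]$ with $M=\sum_p\|\phi_{q,p}\|_\infty$. I enlarge this to $I'=[-M-1,M+1]$, so that whenever $d\eta\le1$ the perturbed arguments $\hat z_q$ also lie in $I'$. I extend $\Phi_q$ continuously to $I'$; it is then uniformly continuous there. Applying the same RBF universal approximation result on $I'$ gives $\hat\Phi_q$ from the second activation layer with $\sup_{I'}|\Phi_q-\hat\Phi_q|\le\eta'$.

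\textbf{Step 3: error split.} For each $q$ the triangle inequality gives
\begin{equation*}
|\Phi_q(z_q)-\hat\Phi_q(\hat z_q)|\le |\Phi_q(z_q)-\Phi_q(\hat z_q)|+|\Phi_q(\hat z_q)-\hat\Phi_q(\hat z_q)|\le \delta_{\Phi}(d\eta)+\eta' .
\end{equation*}
Summing over the $2d+1$ outer functions yields
\begin{equation*}
\|u-\hat u\|_\infty\le(2d+1)\bigl(\delta_\Phi(d\eta)+\eta'\bigr).
\end{equation*}
I first choose $\eta'$ so that $(2d+1)\eta'\le c/2$. Then, using \eqref{uniform3}, I choose $\eta\le 1/d$ small enough that $(2d+1)\delta_\Phi(d\eta)\le c/2$. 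Together these give $\|u-\hat u\|_\infty\le c$.

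\textbf{Main obstacle.} The delicate point is the order of choices in the composition. The outer approximation must be controlled on a neighbourhood $I'$ that contains the perturbed inner outputs $\hat z_q$, and the inner accuracy $\eta$ must then be tuned to the modulus of continuity of $\Phi_q$. Enlarging the interval by a fixed margin and fixing $\eta'$ before $\eta$ resolves this circularity, since $\eta'$ does not depend on $\eta$.

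A secondary check is that the architecture matches the network in Fig.~\ref{fig:rbfnn}: the first layer must act coordinatewise, the second layer must act on each $\phi^2_q$ separately, and linear output weights must be allowed. If the network's output weights are fixed, they can be absorbed by choosing the scaling of the Gaussian sums appropriately.
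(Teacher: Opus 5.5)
Your proof is correct. Its composition step is the same as the paper's: the final estimate \eqref{thm1_bound} splits the error exactly as you do, into $\delta_{\Phi_q}$ applied to the accumulated inner error $\sum_p|\phi_{q,p}-\phi_{q,p,N}|$, plus the sup-error of the outer approximant.

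The difference is in how the univariate Gaussian approximants are obtained. You take them from the single-layer RBF universal approximation theorem as a black box. The paper builds them by hand:
\begin{itemize}
\item it extends each $\phi_{q,p}$ to a compactly supported continuous function and mollifies it with the Gaussian $B_{\varepsilon^2}$ of width $\varepsilon^2$;
\item it then discretizes a second convolution $\phi^{\varepsilon}_{q,p}*B_{\varepsilon^2}$ by Clenshaw--Curtis quadrature with $N$ nodes, giving $\phi_{q,p,N}(x)=\sum_i w_i\,\phi^{\varepsilon}_{q,p}(x_{q,p,i})\,B_{\varepsilon^2}(x-x_{q,p,i})$;
\item this comes with an explicit error bound in $(\varepsilon,N)$, and the same is done for $\Phi_q$.
\end{itemize}

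The paper's route buys a rate. That rate is what underlies the subsequent discussion with $N=\varepsilon^{-10}$ and the dimension-independent $\mathcal{O}(N^{-1/10})$ behaviour. It also uses one common width and prescribed centers, so it automatically fits an architecture in which the scale $\beta_i^k$ is shared by the $\ell$ replicated neurons.

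Your route buys brevity and robustness. It needs only continuity, and it avoids the compact-support extension, the derivative bounds on the mollified functions, and the sparse-grid interpolation estimate, where the bookkeeping of powers of $\varepsilon$ is delicate. Your treatment of the domain of $\Phi_q$ is also cleaner than the paper's assumption that $\Phi_q$ is compactly supported: you fix the compact enlargement $I'$ and use the modulus of continuity there.

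Two minor points:
\begin{itemize}
\item If scales are shared across replicas, you should invoke the common-width form of the density result. The convolution argument behind Park--Sandberg provides this for every sufficiently small width, so sharing a width across different $q$ is harmless too.
\item The ``circularity'' you flag does not actually arise. Your first error term involves the exact $\Phi_q$, so once $I'$ is fixed through $d\eta\le 1$, the choices of $\eta$ and $\eta'$ are independent.
\end{itemize}
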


The proof of Theorem~\ref{theorem1} is provided in Appendix~\ref{proof_theorem1}. Note that Theorem~\ref{theorem1} does not require \(\Phi_q\) or \(\phi_{q,p}\) to be smooth; only uniform continuity is assumed, which is automatically satisfied when the domain of \(f\) is compact. Hence, Theorem~\ref{theorem1} applies to a broad class of continuous functions and justifies the use of our hierarchical RBF-KAN for the general approximation of functions. 

To illustrate how the proposed hierarchical RBF-KAN may alleviate the curse of dimensionality, we further assume that $\Phi_q$ and $\phi_{q,p}$, together with their first-order derivatives, are uniformly bounded by a constant $M$. Taking $N(\varepsilon)\coloneqq \varepsilon^{-10}$, the hierarchical RBF-KAN approximation error bound Eq.~\eqref{thm1_bound} can then be simplified as
\begin{equation}
\begin{aligned}
    &\left|u(\bm{x})-\sum_{q=0}^{2d}\Phi_{q,N}\!\left(\sum_{p=1}^d \phi_{q,p,N}(x_p)\right)\right| \\
    &\leq
    \sum_{q=0}^{2d}\sum_{p=1}^d M\,\left|\phi_{q,p}(x_p)-\phi_{q,p,N}(x_p)\right|
    +
    \sum_{q=0}^{2d}
    \Bigg(
    4M\bigl(1-\Psi(\varepsilon^{-1})\bigr)
    +2M\varepsilon \\
    &\qquad\qquad
    +\tilde c_{d,k}\bigl(4(a_q+2)\bigr)^k
    \varepsilon^{-k}\log N\,
    \|B_{1}\|_{\infty,k}\|\Phi_q\|_{\infty} 
    +2\bigl(1-\Psi(\varepsilon^{-1})\bigr)(a_q+1)\|\Phi_q\|_{\infty}
    \Bigg)\\
    &\leq \sum_{q=0}^{2d}\sum_{p=1}^{d}M\Big(4M\bigl(1-\Psi(\varepsilon^{-1})\bigr)
    +2M\varepsilon \\
    &\qquad\qquad
    +\tilde c_{d,k}\bigl(4(x_{q,p}+1)\bigr)^k
    \varepsilon^{-k}(-10)\log \varepsilon\,
    M\,\|B_{1}\|_{\infty,k} 
    +2\bigl(1-\Psi(\varepsilon^{-1})\bigr)x_{q,p}M\Big)
    \\
    &\quad+\sum_{q=0}^{2d}
    \Bigg(
    4M\bigl(1-\Psi(\varepsilon^{-1})
    +2M\varepsilon 
    \\
    &\qquad\qquad+\tilde c_{d,k}\bigl(4(a_q+2)\bigr)^k
    \varepsilon^{-k}(-10)\log \varepsilon\,
    M\|B_{1}\|_{\infty,k}
    +2\bigl(1-\Psi(\varepsilon^{-1})\bigr)(a_q+1)M
    \Bigg),
\end{aligned}
\label{further_simplify}
\end{equation}
where $\Psi(\varepsilon^{-1})
\coloneqq
\int_{-\varepsilon^{-1}}^{\varepsilon^{-1}}
\frac{1}{\sqrt{2\pi}}
\exp\!\left(-\frac{x^2}{2}\right)\d x$,
\(x_{q,p}\) and \(a_q\) are parameters associated with the Kolmogorov--Arnold representation in Eq.~\eqref{KAN_expression}, and \(\tilde{c}_{d,k}\) denotes a positive constant. In Eq.~\eqref{further_simplify}, the right-hand side is of order at most \(\mathcal{O}(\varepsilon)=\mathcal{O}(N^{-1/10})\) for \(k\geq 2\), and the resulting convergence rate as $\varepsilon=N^{-\frac{1}{10}}\rightarrow0$ does not depend explicitly on the dimension of the input variable $\bm{x}$. Specifically, $\hat{u}=\sum_{q=0}^{2d}\Phi_{q,N}\big(\sum_{p=1}^d\phi_{q,p,N}(x_p)\big)$ is the output of the single-block hierarchical RBF-KAN without the ResNet technique, as shown in Fig.~\ref{fig:rbfnn}(a). Therefore, under mild conditions on $u$ and on the first-order derivatives of $\phi_{q,p,N}$ and $\Phi_{q,N}$, the approximation accuracy of the optimal single-block RBF-KAN does not explicitly deteriorate in the derived convergence estimate as the dimension $d$ increases.

For the multi-block hierarchical RBF-KAN equipped with the ResNet technique, as shown in Fig.~\ref{fig:rbfnn}, the same approximation result also holds, since any function representable by its first block with the ResNet coefficients of the first block being set to zero remains representable by the full network. Finally, we shall demonstrate empirically that using the same network structure as Fig.~\ref{fig:rbfnn} with other activation functions may perform poorly in certain cases, such as the approximation of highly oscillatory functions.

We next extend Theorem~\ref{theorem1} to the approximation of vector-valued functions $\bm{u}(\bm{x})\in\mathbb{R}^n$.

\begin{corollary}
\rm
\label{col1}
Let
\[
\bm{u}(\bm{x})
=
(u_1(\bm{x}),\ldots,u_n(\bm{x})),
\qquad
\bm{x}\in\Omega=[-1,1]^d,
\]
where each continuous component function \(u_i\) admits a Kolmogorov--Arnold representation of the form given in Eq.~\eqref{KAN_expression}.
 Then, for any $c>0$, there exists a hierarchical RBF-KAN of the form shown in Fig.~\ref{fig:rbfnn}, with output denoted by $\hat{\bm{u}}(\bm{x})$, such that
\begin{equation}
\begin{aligned}
    \|\bm{u}-\hat{\bm{u}}\|_{\infty}\leq c,
\end{aligned}
\label{col1_result}
\end{equation}
where
\[
\|\bm{u}\|_{\infty}
\coloneqq
\max_{1\leq i\leq n}\|u_i\|_{\infty}.
\]
\end{corollary}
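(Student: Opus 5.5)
The plan is to apply Theorem~\ref{theorem1} to each component separately and then assemble the $n$ scalar networks into one hierarchical RBF-KAN with $n$ outputs, using the block layout of Fig.~\ref{fig:rbfnn}(a).

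\textbf{Step 1: componentwise approximation.} Fix $c>0$. For each $i=1,\dots,n$, the component $u_i$ is continuous on $[-1,1]^d$ and admits a Kolmogorov--Arnold representation of the form Eq.~\eqref{KAN_expression}, with inner functions $\phi^{(i)}_{q,p}$ and outer functions $\Phi^{(i)}_q$. Theorem~\ref{theorem1} then gives a single-block hierarchical RBF-KAN with output $\hat u_i$ such that $\|u_i-\hat u_i\|_\infty\le c$. Concretely,
\[
\hat u_i=\sum_{q=0}^{2d}\Phi^{(i)}_{q,N_i}\Bigl(\sum_{p=1}^d \phi^{(i)}_{q,p,N_i}(x_p)\Bigr).
\]
Here $\phi^{(i)}_{q,p,N_i}$ and $\Phi^{(i)}_{q,N_i}$ are univariate Gaussian-RBF expansions with trainable centers and scales.

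\textbf{Step 2: stacking the networks.} I would build one network whose first post-activation layer contains the inner units $\phi^{(i)}_{q,p,N}$ for all $i$. Each such unit acts only on its own input $x_p$, with $x_p$ replicated $\ell$ times as in the architecture. The $n(2d+1)$ aggregated quantities $\phi^2$ are then sums over $p$. Their indexing $\lfloor (i-1)/d\rfloor$ in Fig.~\ref{fig:rbfnn}(a) already labels which output component each group belongs to. The second activation layer applies $\Phi^{(i)}_{q,N}$ to the corresponding $\phi^2$. Each output $\hat u_i$ is then the linear sum of its own group of $2d+1$ outer terms.

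To do this we need a common number of RBF terms $N=\max_i N_i$. Any expansion with $N_i$ terms can be padded to $N$ terms by adding terms with zero output weight. Alternatively, one can note that the bound in Eq.~\eqref{thm1_bound} is monotone in $N$, so increasing $N$ does not hurt. Because the two activation layers are not dense, the groups belonging to different $i$ do not interact. Hence the $i$-th output of the combined network equals $\hat u_i$ exactly.

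\textbf{Step 3: conclusion.} Taking the maximum over $i$ gives
\[
\|\bm u-\hat{\bm u}\|_\infty=\max_i\|u_i-\hat u_i\|_\infty\le c.
\]
For the multi-block or ResNet version, the same argument as after Theorem~\ref{theorem1} applies: set the extra blocks' ResNet coefficients to zero.

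\textbf{Main obstacle.} The only delicate point is Step 2. We must check that the stacked network truly fits the architecture of Fig.~\ref{fig:rbfnn}, with the widths in the required ratio of $2d+1$ and a common $\ell$ or $N$ across components. I would handle this by the zero-weight padding argument above, which lets any smaller expansion be embedded in a larger one without changing its output.
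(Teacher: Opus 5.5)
Your proposal is correct and is essentially the paper's argument: apply Theorem~\ref{theorem1} to each $u_i$, merge the $n$ resulting networks into one hierarchical RBF-KAN by setting the redundant cross-component linear-layer coefficients to zero, and take the maximum over $i$. Your zero-weight padding to a common $N$ and your note on the multi-block/ResNet case make explicit details the paper leaves implicit; the alternative appeal to monotonicity of Eq.~\eqref{thm1_bound} in $N$ is not needed and can be dropped.
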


\begin{proof}
For each component function $u_i$, Theorem~\ref{theorem1} guarantees the existence of a hierarchical RBF-KAN, denoted by $R_i$, whose output $\hat{u}_i$ satisfies the error estimate
\[
\|u_i-\hat{u}_i\|_{\infty}\leq c.
\]
By combining the networks $\{R_i\}_{i=1}^n$ (redundant coefficients in the dense linear layers may be set to zero), we construct a hierarchical RBF-KAN with the output:
\[
\hat{\bm{u}}(\bm{x})
=
(\hat{u}_1(\bm{x}),\ldots,\hat{u}_n(\bm{x})).
\]
Then, by the definition of the vector-valued supremum norm,
\[
\|\bm{u}-\hat{\bm{u}}\|_{\infty}
=
\max_{1\leq i\leq n}
\|u_i-\hat{u}_i\|_{\infty}
\leq c.
\]
This completes the proof.
\end{proof}

\subsection{The universal approximation ability of our hierarchical RBF-SKAN for learning random field models}

In this subsection, we shall develop a stochastic version of the hierarchical RBF-KAN introduced in Subsection~\ref{subsec2:1} (RBF-SKAN). Our RBF-SKAN introduces randomness in the hierarchical RBF-KAN to learn random field models from noisy data. 
The structure of our RBF-SKAN is given in Fig.~\ref{fig:rbfnn}. For each realization, the random variables $w_{i, j}^1$ and the random scales $\beta^2_{i, j, k}, \beta^4{i, j}$ are sampled independently. Therefore, the output of our RBF-SKAN is sampled from a distribution determined by the input $\bm{x}$.
    \begin{figure}
    \centering
\includegraphics[width=0.9\linewidth]{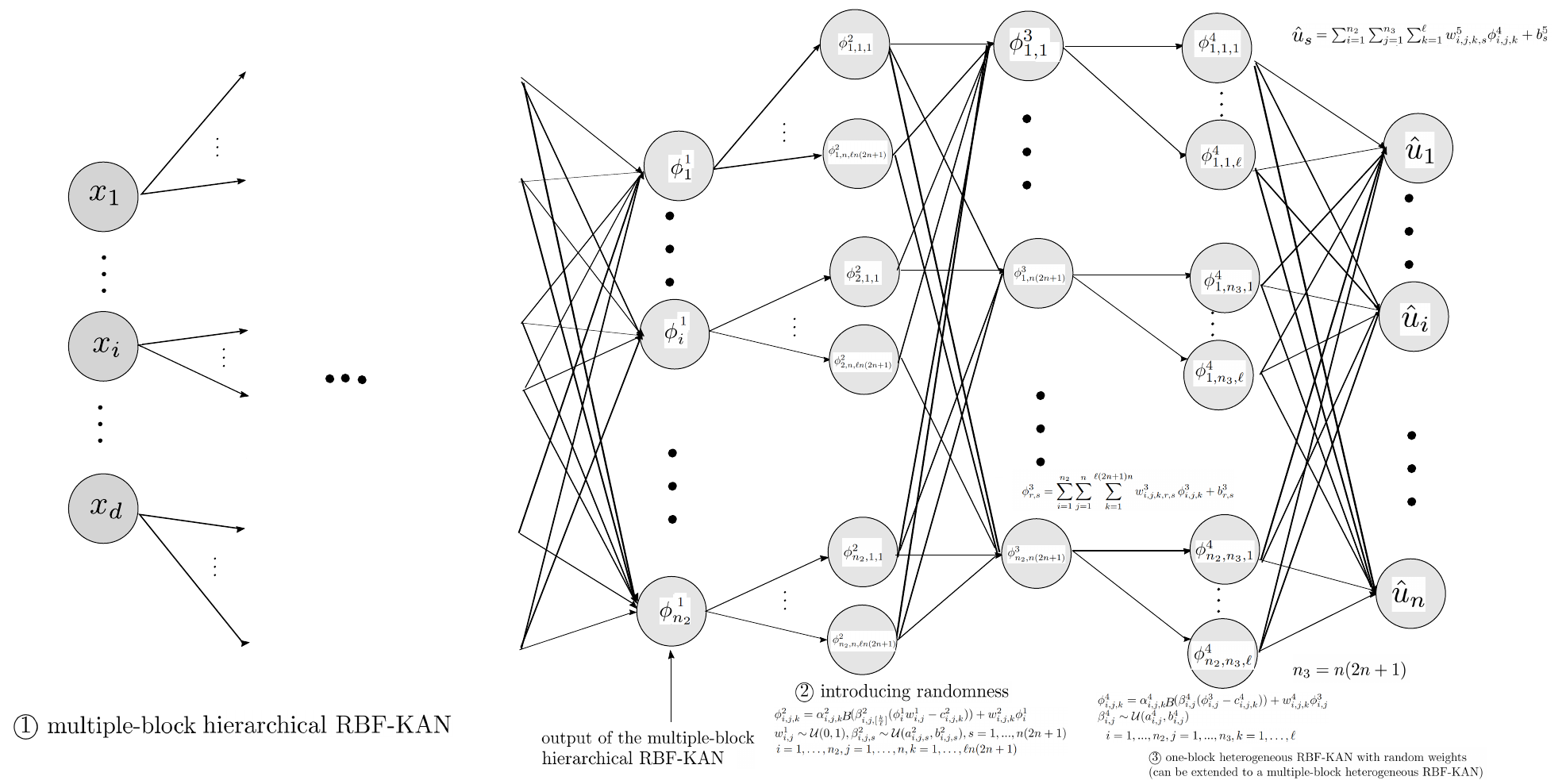}
    \caption{
    The proposed hierarchical RBF-SKAN is constructed by integrating two RBF-KAN modules. The activation function is chosen as the Gaussian kernel Eq.~\eqref{Gaussian_kernel}.
The first module is a multi-block hierarchical RBF-KAN, as shown in Fig.~\ref{fig:rbfnn}, which takes the inputs $x_1,\ldots,x_d$ and outputs $\phi_1^1,\ldots,\phi_{n_2}^1$. The second module, which may also be either one-block or multi-block hierarchical RBF-KAN, takes $\phi_i^{1}w_{i,k}$, $i=1,\ldots,n_2$, $k=1,\ldots,n$, as inputs and produces the output $\hat{\bm{y}}=(\hat{y}_1,\ldots,\hat{y}_n)$. $w_{i, k}$ is sampled from $\mathcal{U}(0, 1)$ independently for each realization to introduce randomness. $B$ is the Gaussian kernel activation function Eq.~\eqref{Gaussian_kernel}. In addition, we introduce randomness into the RBF scale parameters ($\{\beta_{i,j,k}^2\}$ and $\{\beta_{i,j}^4\}$) in the second hierarchical RBF-KAN by letting them be sampled from independent uniform distributions, with both their upper and lower bounds treated as trainable parameters.}
    \label{fig:rbfsnn}
\end{figure}

We shall prove how a special instance of the RBF-SKAN in Fig.~\ref{fig:rbfsnn} could approximate an unknown random field model under the squared Wasserstein-2 ($W_2$) metric:
\begin{equation}
    \bm{y}_{\bm{x}} = \bm{u}(\bm{x};\omega),\qquad \omega\in \Omega
    \label{random_variable}
\end{equation}
under nonrestrictive conditions. For this special case of the RBF-SKAN, we enforce the variances of \(\beta_{i,j,k}^2\) and \(\beta_{i,j}^4\) to be zero; that is, the scale parameters of the RBFs in the second hierarchical RBF-KAN are taken to be deterministic, so that the randomness arises solely from \(w_{i,j}^1\). We then denote the output of this particular hierarchical RBF-SKAN by
\begin{equation}
    \hat{\bm{y}}_{\bm{x}}
    =
    \hat{\bm{u}}(\bm{x};\hat{\omega}),
    \qquad
    \hat{\omega}\in\hat{\Omega}.
    \label{approximate_random_variable}
\end{equation}

Below, we first introduce the $W_2$ distance between the probability distributions associated with $\bm{y}_{\bm{x}}$ and $\hat{\bm{y}}_{\bm{x}}$ in Eqs.~\eqref{random_variable} and \eqref{approximate_random_variable} as:
\begin{definition}
\rm 
\label{def:W2}
For two continuous random variables $\bm{y}, \hat{\bm{y}}\in\mathbb{R}^n$, we assume that
\begin{equation}
    \E[\|\bm{y}\|^2]< \infty,\,\,\,\,\E[\|\hat{\bm{y}}\|^2]< \infty,\,\,\forall \bm{x}\in D
\end{equation}
where $\|\cdot\|$ is a distance metric defined for $\bm{y}$. We denote the probability density functions associated with $\bm{y}$ and $\hat{\bm{y}}$ by $f$ and $\hat{f}$, respectively. The $\bm{W_2}$ \textbf{distance} defined for $f$ and $\hat{f}$ is:
\begin{equation}
W_{2}(f, \hat{f}) \coloneqq \inf_{{\pi_{{f}, \hat f}}}
\E_{(\bm{y}, \hat{\bm
{y}})\sim {{\pi_{{f}, \hat f}}}(\bm{y}, \hat{\bm
{y}})}\big[\|{\bm{y}} - \hat{{\bm{y}}}\|^{2}\big]^{\frac{1}{2}}.
\label{pidef}
\end{equation}
In Eq.~\eqref{pidef}, $\pi_{{f}, \hat f}(\bm{y}, \hat{\bm
{y}})$ is a special coupled measure of the joint random variable $(\bm{y}, \hat{\bm
{y}})$ whose marginals coincide with the probability measures of $\bm{y}$ and $\hat{\bm{y}}$, respectively:
\begin{equation}
\begin{aligned}
\begin{cases}
{\pi_{f, \hat f}}\left(A_1 \times  \mathbb{R}^{d}\right) = \int_{A_1}f(\bm{y})\d\bm{y},\\
{\pi_{{f}, \hat f}}\left( \mathbb{R}^{d}\times A_2\right) = \int_{A_2}\hat f(\hat{\bm{y}})\d\hat{\bm{y}}, 
\end{cases}\hspace{1cm}\forall A_1, A_2\in \mathcal{B}( \mathbb{R}^{d}),
\end{aligned}
\label{pi_def}
\end{equation}
{where $\mathcal{B}(\mathbb{R}^{d})$ denotes the Borel $\sigma$-algebra associated with $\mathbb{R}^{d}$, and the infimum in Eq.~\eqref{pidef} iterates over all coupled distributions $\pi_{f, \hat f}(\bm{y}, \hat{\bm
{y}})$ of $(\bm{y}, \hat{\bm{y}})$ satisfying Eq.~\eqref{pi_def}.}
\end{definition}
We make the following assumptions:
\begin{assumption}
\rm
\label{assumptions_w2}

\begin{enumerate}
    \item $\bm{y}_{\bm{x}}$ and $\hat{\bm{y}}_{\bm{x}}$ in Eqs.~\eqref{random_variable} and \eqref{approximate_random_variable} are uniformly bounded. Without loss of generality, we
    further assume that each component of $\bm{y}_{\bm{x}}$ and $\hat{\bm{y}}_{\bm{x}}$ is in $[0, 1]$.
\item In Eqs.~\eqref{random_variable} and \eqref{approximate_random_variable}, $\omega$ is independent of $\bm{x}$ and $\hat{\omega}$ is independent of $\bm{x}$. 
\item The probability measures associated with random variable $\bm{y}_{\bm{x}}$ in Eq.~\eqref{random_variable} is uniform Lipschitz continuous on $\bm{x}$ 
in the $W_2$ distance sense: 
\begin{equation}
    W_2(f_{\bm{x}}, f_{\tilde{\bm{x}}})\leq L \|\bm{x}-\tilde{\bm{x}}\|_2, \,\,  \forall \bm{x}, \hat{\bm{x}}\in D, 
    \label{l_condition}
\end{equation}
where $f_{\bm{x}}$ is the probability measure associated with $\bm{y}_{\bm{x}}$. To exemplify how Assumptions 3 (and 4 below) can be met, consider the case when $f$ is uniformly Lipschitz continuous in $\bm{x}$ for all $\omega$. 
Then, we have:
\begin{equation}
W_2^2(f_{\bm{x}},f_{\hat{\bm{x}}})\leq \E\big[\|\bm{y}({\bm{x}};\omega)-\bm{y}({\tilde{\bm{x}}};\omega)\|^2\big]\leq L^2 \|\bm{x}-\tilde{\bm{x}}\|_2^2.
\end{equation}
\end{enumerate}
\end{assumption}

We can prove the following asymptotic universal approximation result for the hierarchical RBF-SKAN shown in Fig.~\ref{fig:rbfsnn}.  

\begin{theorem}
    \rm
    \label{theorem2}
    Let $\bm{x}\in\Omega=[-1,1]^d$ be equipped with an associated probability measure $\nu(\cdot)$. Let $f_{\bm{x}}$ denote the probability density function of $\bm{y}(\bm{x};\omega)$ in Eq.~\eqref{random_variable}. Then, for any $c>0$, under Assumption~\ref{assumptions_w2}, there exists a hierarchical RBF-SKAN with deterministic weights, scales, and centers , which can be viewed as a special case of the hierarchical RBF-SKAN shown in Fig.~\ref{fig:rbfsnn} (so randomness is introduced from stochastic $w_{i, j}^1$ in Fig.~\ref{fig:rbfsnn}), such that, if $\hat{\bm{y}}(\bm{x};\hat{\omega})$ denotes its output with associated probability density $\hat{f}_{\bm{x}}$, we have
\begin{equation}
    \int_{\Omega} W_2^2(f_{\bm{x}}, \hat{f}_{\bm{x}})\,\nu(\d \bm{x}) < c,
\end{equation}
where \(\nu(\cdot)\) is a non-degenerate probability measure on \(\Omega\) associated with \(\bm{x}\), and \(\nu(A_n)\to 0\) for any sequence of measurable sets \(A_n\downarrow \varnothing\).
\end{theorem}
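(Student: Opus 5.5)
We reduce the distributional approximation to a deterministic one and then invoke Theorem~\ref{theorem1} and Corollary~\ref{col1}. The key observation is that a random variable whose law depends Lipschitz-continuously on $\bm{x}$ in $W_2$ can be realized, for a fixed $\bm{x}$, as a measurable function of finitely many independent uniform variables. This is the standard ``noise outsourcing'' or Knothe--Rosenblatt construction via conditional quantile functions. The plan has three steps: discretize in $\bm{x}$, transport-represent each local law with uniform noise, and approximate the resulting deterministic map by the second RBF-KAN module. The errors are then combined with the triangle inequality for $W_2$.

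\emph{Step 1 (discretization in $\bm{x}$).} Fix $\delta>0$ and partition $\Omega=[-1,1]^d$ into cubes $Q_1,\dots,Q_m$ of diameter at most $\delta$, with representative points $\bm{x}_j\in Q_j$. By Assumption~\ref{assumptions_w2}(3), $W_2(f_{\bm{x}},f_{\bm{x}_j})\le L\delta$ for $\bm{x}\in Q_j$. For each $\bm{x}_j$, let $T_j:[0,1]^n\to[0,1]^n$ be the Knothe--Rosenblatt map, i.e.\ iterated conditional quantile functions. Then $T_j(\bm{w})$ with $\bm{w}\sim\mathcal{U}(0,1)^n$ has law $f_{\bm{x}_j}$, and $T_j$ takes values in $[0,1]^n$ by Assumption~\ref{assumptions_w2}(1). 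Define the piecewise-constant target $G(\bm{x},\bm{w})=T_j(\bm{w})$ for $\bm{x}\in Q_j$.

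\emph{Step 2 (regularization).} $T_j$ is only measurable, so we approximate it in $L^2([0,1]^n)$ by a continuous $\tilde T_j$ with $\|T_j-\tilde T_j\|_{L^2}<\delta$, using Lusin's theorem or density of $C([0,1]^n)$ in $L^2$. We then glue the pieces in $\bm{x}$ with a continuous partition of unity $\{\chi_j\}$ subordinate to slight enlargements of the $Q_j$, obtaining a continuous $\tilde G(\bm{x},\bm{w})=\sum_j\chi_j(\bm{x})\tilde T_j(\bm{w})$. On a set $E_\delta$ of $\bm{x}$ lying in overlaps, the glued map may mix laws. Because $\nu$ is continuous from above ($\nu(A_n)\to0$ for $A_n\downarrow\varnothing$), we can shrink the overlap widths so that $\nu(E_\delta)<\delta$. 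On $E_\delta$ the integrand $W_2^2$ is at most $n$ by boundedness, so this set contributes at most $n\delta$.

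\emph{Step 3 (network realization).} The first module of Fig.~\ref{fig:rbfsnn} outputs $\phi^1_i(\bm{x})$, and the second module receives the products $\phi^1_i w_{i,k}$. We choose the first module, via Theorem~\ref{theorem1}, to output approximately constant features, with some $\phi_i^1\approx 1$ so that the $w_{i,k}$ pass through, together with features encoding $\bm{x}$ (e.g.\ $\phi^1_i\approx$ affine functions of $x_i$ rescaled to be positive). The second module must then approximate the continuous function $(\bm{x}\text{-features},\bm{w})\mapsto\tilde G$ on a compact set. Here we apply Corollary~\ref{col1} in the enlarged input variables, since $\tilde G$, being continuous on a compact cube, admits a Kolmogorov--Arnold representation. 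This yields $\sup|\hat{\bm{u}}-\tilde G|<\delta$ up to the composition error from the inexact first module. That error is controlled by uniform continuity of the second module on the compact range.

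Finally, using the coupling $(\tilde G(\bm{x},\bm{w}),\hat{\bm{u}}(\bm{x};\bm{w}))$ with the same $\bm{w}$, we get the bound
\[
W_2(f_{\bm{x}},\hat f_{\bm{x}})\le L\delta+\|T_j-\tilde T_j\|_{L^2}+\sup|\tilde G-\hat{\bm{u}}| \le (L+2)\delta
\]
off $E_\delta$. Integrating against $\nu$ gives $\int W_2^2\,\d\nu\le (L+2)^2\delta^2+n\delta<c$ for small $\delta$.

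The main obstacle is Step 3, which has two parts. First, the multiplicative input structure $\phi^1_i w_{i,k}$ must be shown to allow the second module to recover both $\bm{x}$ and $\bm{w}$ separately. I would handle this by dedicating disjoint feature indices, some with $\phi^1_i\equiv$ const to carry $\bm{w}$ and some with $w$-dependence effectively neutralized. One way to neutralize is to let the second module depend on those inputs only through coordinates where positive $\phi$ varies, and then recover $\bm{x}$ by using several independent copies. Second, the composition error must be controlled rigorously.
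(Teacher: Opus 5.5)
Your route differs from the paper's, and once the two points below are repaired it is valid. Both arguments share a skeleton: discretize $\Omega$, write each local law as a pushforward of uniform noise, approximate the deterministic maps via Corollary~\ref{col1}, bound $W_2$ through the same-noise coupling, and absorb a thin boundary layer using $\nu$.

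The paper builds the pieces as follows. The first module approximates smoothed cell indicators $z_{\bm k}(\bm x)$. For each cell it invokes a Moser diffeomorphism $\psi_{\bm k}$ and approximates it uniformly. Its output is $\sum_{\bm k} z_{\bm k}(\bm x)\hat\psi_{\bm k}(\bm\omega_{\bm k})$ with an independent noise vector per cell, which costs a cross-cell term of order $N^{2d}\varepsilon^2$.

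You instead regularize Knothe--Rosenblatt maps in $L^2$ and glue them, with one shared noise vector, into a single continuous map $\tilde G(\bm x,\bm w)$ that the second module approximates jointly. This buys three things:
\begin{itemize}
\item You need no density, positivity or smoothness of the local laws. The Moser step tacitly requires a smooth, strictly positive density, which Assumption~\ref{assumptions_w2} does not supply.
\item You have no cross-cell terms away from the transition layer.
\item You actually confront the fact that the second module only sees products $\phi^1_i w_{i,k}$. The paper simply asserts that stacking $R^1_{\bm k}$ and $R^2_{\bm k}$ outputs the gated sum.
\end{itemize}
The paper's route, in exchange, keeps a clean division of labor (one module selects the cell, the other transports the noise) and gives a sup-norm transport error.

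\textbf{First repair: the boundary layer.} Continuity from above does not give $\nu(E_\delta)\to 0$. With the grid fixed, the transition layers decrease to the union of interior cube faces, not to $\varnothing$, and $\nu$ may charge those faces (e.g.\ an atom on a face). Choose the cut points in each coordinate $x_i$ outside the countable set $\{t:\nu(\{x_i=t\})>0\}$. Then the faces are $\nu$-null and your limit argument goes through. The paper's claim $\nu(\Omega\setminus\Omega_\varepsilon)\to 0$ has the same defect and the same fix.

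\textbf{Second repair: recovering $\bm x$.} You cannot recover $\bm x$ exactly from inputs that are all multiplied by almost surely non-degenerate uniforms, so $\sup|\tilde G-\hat{\bm u}|<\delta$ is unattainable. Replace it by an $L^2$-in-noise bound, which is all your coupling uses. Concretely:
\begin{itemize}
\item Use one carrier feature $\phi^1_{i_0}\approx 1$ and, for each coordinate $p$, $K$ copies of a feature $\phi^1_i\approx (x_p+3)/4$.
\item Let $\mathcal B_p$ index the corresponding inputs, and set $r_p(\bm v)=4\max_{(i,k)\in\mathcal B_p}v_{i,k}-3$, clipped to $[-1,1]$.
\item Let the second module approximate the continuous map $\bm v\mapsto \tilde G\bigl(\bm r(\bm v),\bm v_{\mathrm{carrier}}\bigr)$.
\end{itemize}
Then $|r_p-x_p|\le 4(1-M_p)$, where $M_p$ is the maximum of $Kn$ independent uniforms, and this tends to $0$ in probability as $K\to\infty$. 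The map $\tilde G$ is uniformly continuous once the partition of unity is fixed, and so is the composite map on its compact domain. Together these give $\E\|\tilde G(\bm x,\bm w)-\hat{\bm u}\|^2\to 0$. Your final estimate then holds with the $\sup$ replaced by this $L^2$ norm.
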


The proof of Theorem~\ref{theorem2} is given in Appendix~\ref{proof_theorem2}. Theorem~\ref{theorem2} is an asymptotic result, and obtaining a more quantitative estimate of the number of neurons (or network width) required for the RBF-SKAN to achieve a prescribed approximation accuracy under the $W_2$ metric generally requires additional information about the measure $\nu(\cdot)$ and is therefore problem-dependent.

\section{Numerical examples}
\label{section3}

In this section, we carry out several numerical experiments to evaluate the performance of our proposed hierarchical RBF-KAN architecture shown in Fig.~\ref{fig:rbfnn} and compare it with existing RBF-based and KAN-based neural network architectures for learning multivariate functions and dynamical systems. We also compare the proposed hierarchical RBF-SKAN framework in Fig.~\ref{fig:rbfsnn} with several prevailing machine-learning approaches for uncertainty quantification (UQ) in random field reconstruction problems. All numerical experiments are implemented in Python 3.11 and conducted on a desktop equipped with a 32-core Intel\textsuperscript{\textregistered} Core i9-13900KF CPU. The hyperparameter settings and neural network configurations used in all experiments are summarized in Table~\ref{tab:setting} in Appendix~\ref{training_details}.


\begin{example}
    \rm
    \label{example1}

First, we consider learning a multidimensional extension of the highly oscillatory function studied in \cite{jiang2022efficient,jagtap2020adaptive}:
\begin{equation}
\left\{
\begin{aligned}
z(\bm{x})
&=
x_1
\prod_{i=2}^{d}
\left(
\left|
\frac{10x_i}{9}
\right|^{\frac{i-1}{i}}
\mathrm{sign}(x_i)
\right),
\\
u(\bm{x})
&=
\sin(z(\bm{x}))\cdot
\bigl(z(\bm{x})^3-z(\bm{x})\bigr)
+
\sin\bigl(12z(\bm{x})\bigr),
\end{aligned}
\right.
\label{example1_model}
\end{equation}
where $\bm{x}=(x_1,\ldots,x_d)$ and each $x_i$ is independently sampled from the uniform distribution $\mathcal{U}(-3,3)$. 
We compare the proposed hierarchical RBF-KAN shown in Fig.~\ref{fig:rbfnn} with several existing neural network architectures employing RBF activation functions, including RBF-MLP-I \citep{chao2001multilayer}, RBF-MLP-II \citep{jiang2022efficient}, and the RBF-KAN framework proposed in \cite{chao2026rbf}, as well as multilayer neural networks equipped with alternative activation functions and the spline-based KAN \citep{liu2024kan}.

{\tiny \begin{table}[htbp]
\centering
\caption{Error on the testing set for different neural network structures for learning Eq.~\eqref{example1_model}. For each type of NN structure, the error without equipping the ResNet technique is shown in brackets, and values outside parentheses correspond to errors of ResNet-equipped models. For the RBF-MLP-II, RBF-MLP-II, RELU-MLP, Sigmoid-MLP, and Tanh-MLP neural networks, each hidden layer has the same number of neurons (50). For the RBF-KAN structure, each hidden layer consists of 400 neurons, whose outputs are then fed into a linear layer to generate 50 outputs for the next hidden layer, as detailed in \cite{chao2026rbf}. }
\label{tab:three_blocks}
\tiny
\begin{tabular}{ccccccc}
\toprule
\multicolumn{7}{c}{\textbf{one-block hierarchical RBF-KAN versus two-layer multilayer perceptrons}} \\
\midrule
Dimension of $\bm{x}$  & 1 & 2 & 3 & 4 & 5 & 6 \\
\midrule
\textbf{hierarchical RBF-KAN (ours)} 
& \textbf{2.0912(0.1620)} 
& \textbf{1.5653(0.2803)} 
& \textbf{0.3159(0.3890)} 
& \textbf{0.1767(0.2557)} 
& \textbf{0.0321(0.0814)} 
& \textbf{0.0246(0.2300)} \\
RBF-MLP-I & 0.2660(0.0527) & 0.5544(0.3196) & 0.2748(0.6291) & 0.1972(0.7309) & 0.2631(0.7337) & 0.4052(0.5150) \\
RBF-MLP-II & 0.8209(0.0012) & 0.5152(0.0573) & 0.5637(0.7355) & 0.4811(0.6859) & 0.1015(0.3661) & 0.0700(0.1003) \\
RBF-KAN & 7.9113(0.0283) & 5.3267(0.0289) & 0.4160(0.0147) & 0.2021(0.0317) & 0.2342(0.0252) & 0.0692(0.2343) \\
RELU-MLP & 0.2765(0.1428) & 0.3536(0.1008) & 0.5299(0.3329) & 0.5751(0.4741) & 0.3948(0.7733) & 0.4114(0.7397) \\
Sigmoid-MLP & 0.2817(0.2124) & 0.3423(0.2406) & 0.3245(0.0666) & 0.1368(0.0973) & 0.1316(0.3126) & 0.2162(0.4247) \\
Tanh-MLP & 0.1085(0.0457) & 0.1695(0.0096) & 0.1175(0.0697) & 0.2191(0.1243) & 0.2313(0.2753) & 0.2351(0.3923) \\
spline KAN   &  0.0002(0.0019) &    0.0453(0.0316)  &  0.2950(0.2202)  &  0.2441(0.2109)    & 0.3101(0.2995)  &  0.1676(0.1598)\\
hierarchical Tanh-KAN & 4.0869(0.3316) & 0.2421(0.2817) & 0.3578(0.2531) & 0.3772(0.1345) & 0.0991(0.0718) & 0.0364(0.1291)\\
\midrule
\multicolumn{7}{c}{\textbf{two-block hierarchical RBF-KAN versus four-layer multilayer perceptrons}} \\
\midrule
\textbf{hierarchical RBF-KAN (ours)} 
& \textbf{0.0247(2.0837)} 
& \textbf{0.0335(0.2438)} 
& \textbf{0.0232(0.3829)} 
& \textbf{0.0381(1.0096)} 
& \textbf{0.0525(0.7442)} 
& \textbf{0.0669(0.4156)} \\
RBF-MLP-I & 0.2620(0.0011) & 0.3561(0.3277) & 0.5122(0.6263) & 0.3809(0.6855) & 0.0402(0.7274) & 0.0484(0.3510)\\
RBF-MLP-II & 0.0531(0.0021) & 0.1835(0.0787) & 0.2651(0.4342) & 0.2930(0.7549) & 0.0578(0.4665) & 0.0530(0.4417)\\
RBF-KAN & 2.1974(7.8114) & 4.1051(0.0467) & 2.7882(0.1179) & 1.3717(0.0251) & 0.7442(0.0150) & 0.4156(0.0336) \\
RELU-MLP & 0.0185(0.0750) & 0.0204(0.0228) & 0.1483(0.1050) & 0.2237(0.3204) & 0.2096(0.9173) & 0.3178(0.9310) \\
Sigmoid-MLP & 0.1619(0.0124)&   0.1339(0.1201)&  0.2271(0.1053)& 0.0677(0.1041)& 0.0887(0.1190)&   0.1450(0.3572) \\
Tanh-MLP &0.0046(0.0011) & 0.0122(0.0061) & 0.0143(0.0039) & 0.0720(0.0881) & 0.2047(0.1995) & 0.2073(0.2639) \\
spline KAN   &  0.0009(0.0013) &    0.0474(0.0541)  &  0.1601(0.1503)  &  0.1859(0.1553)    & 0.1829(0.1946)  &  0.0832(0.0747)\\
hierarchical Tanh-KAN &0.1322(0.1365) & 0.2409(0.0489) & 0.0925(0.0853) & 0.0462(0.0815) & 0.0893(0.0738) & 0.0673(0.0221)\\
\midrule
\multicolumn{7}{c}{\textbf{three-block hierarchical RBF-KAN versus six-layer multilayer perceptrons}} \\
\midrule
\textbf{hierarchical RBF-KAN (ours)} 
& \textbf{0.0076(9.7610)} 
& \textbf{0.0120(6.6664)} 
& \textbf{0.0199(3.3935)} 
& \textbf{0.0229(1.3717)} 
& \textbf{0.0166(0.7442)} 
& \textbf{0.0329(0.4156)} \\
RBF-MLP-I & 0.0249(0.1293) & 0.2842(0.4694) & 0.7559(0.5719) & 0.0459(0.6671) & 0.0269(0.5848) & 0.0346(0.4185) \\
RBF-MLP-II & 0.1364(0.0032) & 0.0682(0.1297) & 0.0653(0.4979) & 0.0620(1.1130) & 0.0487(0.7442) & 0.0437(0.4157) \\
RBF-KAN & 9.7610(7.8114) & 6.6664(0.0467) & 3.3935(0.1179) & 1.3717(0.0251) & 0.7442(0.0150) & 0.4156(0.0336) \\
RELU-MLP & 0.0151(0.0250) & 0.0141(0.0150) & 0.0463(0.0783) & 0.1159(0.1817) & 0.1188(0.7212) & 0.3434(0.7354) \\
Sigmoid-MLP & 0.0390(0.3069) & 0.0354(0.1220) & 0.0244(0.1845) & 0.0706(0.0901) & 0.0868(0.2610) & 0.1515(0.3569)\\
Tanh-MLP & 0.0067(0.0103) & 0.0133(0.0094) & 0.0191(0.0061) & 0.0912(0.0978) & 0.1501(0.1481) & 0.1857(0.2332) \\
spline KAN &     0.0007(0.0030) &   0.1900(0.1354)  &  0.1638(0.2021) &   0.2198(0.4687)  &  0.2026(0.2209) &   0.0808(0.0737) \\ 
hierarchical Tanh-KAN & 0.0686(2.5925) & 0.0847(0.0425) & 0.0272(0.2833) & 0.0339(0.6014) & 0.0229(0.7442) & 0.0700(0.3013)\\
\bottomrule
\end{tabular}
\label{example1_error}
\end{table}}

{\tiny \begin{table}[htbp]
\centering
\caption{Runtime for training different neural networks to learn Eq.~\eqref{example1_model}.}
\label{tab:three_blocks}
\tiny
\begin{tabular}{ccccccc}
\toprule
\multicolumn{7}{c}{\textbf{one-block hierarchical RBF-KAN versus two-layer multilayer perceptrons}} \\
\midrule
Dimension of $\bm{x}$  & 1 & 2 & 3 & 4 & 5 & 6 \\
\midrule
\textbf{hierarchical RBF-KAN (ours)} 
& \textbf{23.2(15.9)} 
& \textbf{40.3(17.0)} 
& \textbf{44.8(22.2)} 
& \textbf{47.5(27.0)} 
& \textbf{54.3(28.8)} 
& \textbf{52.6(21.3)} \\
RBF-MLP-I & 158.3(65.8) & 162.9(87.4) & 170.0(86.9) & 165.2(86.8) & 167.4(133.8) & 162.9(132.7) \\
RBF-MLP-II & 56.7(58.4) & 73.0(77.9) & 73.6(73.4) & 68.6(74.9) & 75.9(74.2) & 76.7(71.6) \\
RBF-KAN & 39.5(15.6) & 42.7(17.3) & 32.8(18.6) & 42.3(20.6) & 45.3(24.2) & 41.0(22.8) \\
RELU-MLP & 23.1(16.0) & 27.3(15.8) & 28.0(15.9) & 27.6(16.0) & 31.8(16.3) & 33.1(16.4) \\
Sigmoid-MLP & 29.2(15.6) & 28.4(16.3) & 28.4(16.7) & 26.8(17.4) & 26.2(17.5) & 26.1(17.3) \\
Tanh-MLP & 23.9(13.9) & 24.2(14.4) & 24.2(14.5) & 24.3(14.6) & 22.7(14.6) & 22.6(14.9) \\
spline KAN & 381.7(343.3) & 730.8(347.7) & 354.0(355.6) & 359.3(399.5) & 363.2(400.1) & 365.3(526.9) \\
hierarchical Tanh-KAN 
& 15.0(13.1) 
& 13.8(13.0) 
& 16.2(14.2) 
& 16.8(14.3) 
& 18.1(14.6) 
& 21.0(17.2) \\
\midrule
\multicolumn{7}{c}{\textbf{two-block hierarchical RBF-KAN versus four-layer multilayer perceptrons}} \\
\midrule
\textbf{hierarchical RBF-KAN (ours)} 
& \textbf{187.5(129.2)} 
& \textbf{208.0(177.1)} 
& \textbf{280.2(267.7)} 
& \textbf{343.0(186.4)} 
& \textbf{382.3(181.8)} 
& \textbf{274.0(249.2)} \\
RBF-MLP-I & 279.4(306.9) & 307.7(317.7) & 274.3(314.9) & 249.2(335.5) & 276.5(354.3) & 229.2(240.5) \\
RBF-MLP-II & 67.5(90.5) & 77.7(107.6) & 77.1(104.2) & 78.5(59.6) & 76.6(67.5) & 72.9(64.4) \\
RBF-KAN & 109.4(68.2) & 107.5(74.3) & 93.3(72.9) & 85.8(72.4) & 100.4(72.1) & 101.2(68.7) \\
RELU-MLP & 45.4(34.9) & 47.7(38.6) & 46.8(37.6) & 48.9(37.0) & 49.5(37.6) & 49.7(39.8) \\
Sigmoid-MLP & 83.5(28.0) & 89.4(29.3) & 88.9(30.1) & 87.3(29.7) & 100.1(28.8) & 111.8(30.4) \\
Tanh-MLP & 58.6(32.5) & 62.0(33.4) & 60.3(35.6) & 59.9(32.7) & 61.5(27.7) & 65.0(28.0) \\
spline KAN & 1101.6(1214.8) & 1472.8(1096.5) & 1330.9(1611.6) & 1027.4(1622.2) & 1094.1(1037.9) & 1021.5(1128.0) \\
hierarchical Tanh-KAN & 38.3(29.0) & 45.0(34.1) & 52.5(41.7) & 63.9(55.9) & 106.6(75.0) & 143.8(109.8) \\
\midrule
\multicolumn{7}{c}{\textbf{three-block hierarchical RBF-KAN versus six-layer multilayer perceptrons}} \\
\midrule
\textbf{hierarchical RBF-KAN (ours)} 
& \textbf{437.6(375.9)} 
& \textbf{394.3(367.9)} 
& \textbf{504.4(383.3)} 
& \textbf{655.9(464.3)} 
& \textbf{571.8(446.2)} 
& \textbf{566.8(488.4)} \\
RBF-MLP-I & 526.2(451.5) & 508.3(425.1) & 502.6(423.0) & 494.8(421.7) & 524.6(432.0) & 509.3(426.1) \\
RBF-MLP-II & 111.2(83.9) & 131.6(97.1) & 135.1(97.5) & 139.4(96.0) & 142.8(97.1) & 112.8(106.3) \\
RBF-KAN & 109.4(68.2) & 107.5(74.3) & 93.3(72.9) & 85.8(72.4) & 100.4(72.1) & 101.2(68.7) \\
RELU-MLP & 66.7(53.5) & 68.6(52.4) & 69.6(48.9) & 70.9(53.8) & 69.1(56.0) & 69.3(56.5) \\
Sigmoid-MLP & 71.5(43.6) & 69.5(43.5) & 67.1(42.9) & 68.6(42.0) & 68.9(42.1) & 72.6(42.7) \\
Tanh-MLP & 60.4(36.2) & 61.7(36.1) & 62.0(36.6) & 63.8(36.6) & 68.9(36.8) & 65.8(36.7) \\
spline KAN & 2005.4(1611.0) & 1839.9(1984.1) & 1905.8(1799.8) & 2570.6(2250.8) & 2005.8(2529.4) & 1569.7(2098.2) \\ 
hierarchical Tanh-KAN & 300.1(242.1) & 334.1(240.0) & 334.6(262.5) & 386.8(288.2) & 427.6(316.2) & 428.9(337.3) \\
\bottomrule
\end{tabular}
\label{example1_cost}
\end{table}}

From Table~\ref{example1_error}, we observe that the proposed hierarchical RBF-KAN maintains high approximation accuracy for the oscillatory function Eq.~\eqref{example1_model} as the dimensionality of the input variable $\bm{x}$ increases from $1$ to $6$. In contrast, the existing RBF-MLP-I and RBF-MLP-II architectures achieve satisfactory accuracy only in the univariate setting. Moreover, the naive RBF-KAN framework fails to accurately learn Eq.~\eqref{example1_model}, suggesting that employing a uniform number of neurons across all layers \citep{chao2026rbf}, which does not fully reproduce the Kolmogorov--Arnold representation structure for continuous functions, is insufficient when using RBF activation functions for learning highly oscillatory multivariate functions. Furthermore, the original KAN architecture employing spline activation functions, as proposed in \cite{liu2024kan}, also fails to accurately reconstruct Eq.~\eqref{example1_model} when the dimensionality \(d>1\).
Among the multilayer perceptron (MLP) architectures with standard activation functions, the \texttt{Tanh} activation performs best; however, its prediction accuracy deteriorates rapidly as the dimensionality $d$ increases. For the proposed hierarchical RBF-KAN, increasing the number of blocks consistently improves approximation accuracy, while the incorporation of the ResNet technique is essential for achieving stable and accurate learning performance. As an additional experiment, we replace the Gaussian-kernel RBF activation in the proposed hierarchical RBF-KAN with the \texttt{Tanh} activation function. Compared with the standard Tanh-MLP, the hierarchical Tanh-KAN yields more accurate reconstructions of Eq.~\eqref{example1_model} in higher-dimensional settings, indicating that the hierarchical structure of our RBF-KAN shown in Fig.~\ref{fig:rbfnn} itself contributes significantly to the learning of multivariate functions. Nevertheless, replacing the RBF activation with the \texttt{Tanh} activation leads to reduced approximation accuracy without providing noticeable improvements in computational runtime. One possible explanation is that the \texttt{Tanh} activation function may require stronger smoothness properties of the target function, making it less suitable for approximating the highly oscillatory multivariate function in Eq.~\eqref{example1_model}.

Fig.~\ref{example1_cost} shows that increasing the number of intermediate layers or blocks in the neural network leads to increased computational cost. The runtime of the proposed hierarchical RBF-KAN is longer than that of multilayer perceptrons with the same effective depth (one block in the hierarchical RBF-KAN contains two activation layers and is therefore comparable to two hidden layers in a multilayer perceptron), but it remains more computationally efficient than the spline-based KAN architecture. Moreover, replacing the RBF activation function with the \texttt{Tanh} activation function within the hierarchical RBF-KAN framework does not result in an apparent reduction in runtime. This observation suggests that the increased computational cost relative to standard multilayer perceptrons primarily arises from the hierarchical structure itself, rather than from the use of RBF activation functions.

We also conduct an additional sensitivity analysis on the two key hyperparameters of the hierarchical RBF-KAN: the number of RBFs used to approximate $\phi_{q,p}$ and $\Phi_q$ in the Kolmogorov-Arnold representation Eq.~\eqref{KAN_expression}, denoted by $\ell$, and the number of neurons in the output of each intermediate block, denoted by $n_0$. The corresponding results are presented in Appendix~\ref{appendixB}. We find that choosing a excessively small value of the number of neurons for the intermediate blocks $n_0\leq 4$ leads to inaccurate reconstruction of Eq.~\eqref{example1_model}, whereas increasing $\ell$ improves approximation accuracy. However, increasing either $n_0$ or $\ell$ also increases the computational cost of the proposed hierarchical RBF-KAN.
\end{example}

Next, we carry out an additional test on comparing our proposed Hierarchical RBF-KAN against other neural network architectures for learning a dynamical system. 

\begin{example}
    \rm
    \label{example2}
Consider the Lorenz system, which exhibits chaotic dynamics \citep{hirsch2013differential,lorenz2017deterministic}:
\begin{equation}
\begin{cases}
\frac{\d x}{\d t} = \sigma (y - x), \\
\frac{\d y}{\d t} = x(\rho - z) - y, \\
\frac{\d z}{\d t} = xy - \beta z.
\end{cases}
\label{example2_model}
\end{equation}
We reconstruct Eq.~\eqref{example2_model} using a neural network of the form
\begin{equation}
    \frac{\d (\hat x, \hat y, \hat z)}{\d t}
    =
    \bigl(f_1(\hat{x},\hat{y},\hat{z}),\, f_2(\hat{x},\hat{y},\hat{z}),\, f_3(\hat{x},\hat{y},\hat{z})\bigr),
    \label{example2_approximate}
\end{equation}
where $f_1$, $f_2$, and $f_3$ denote the outputs of the neural network which takes $(\hat x,\hat y,\hat z)$ as its inputs.

To generate the training data, we sample the initial condition according to
\[
(x_0,x_1,x_2)^T \sim \mathcal{N}\bigl((-10,-10,25)^T,\,0.05I_3\bigr),
\]
and generate 30 independent sets of trajectories on the time interval $t\in[0,2]$ with time step $\Delta t=0.04$. Another 30 trajectory sets sampled from the same distribution are used for testing. We use the \texttt{torchdiffeq} package to numerically solve Eqs.~\eqref{example2_model} and~\eqref{example2_approximate}, and to perform backpropagation through the ODE solver for parameter optimization. The neural network is trained by minimizing the mean squared trajectory error:
\begin{equation}
    \frac{1}{T+1}\frac{1}{N}\sum_{i=0}^T\sum_{j=1}^N
    \left\|
    \big(x_j(t_i),y_j(t_i),z_j(t_i)\big)
    -
    \big(\hat x_j(t_i),\hat y_j(t_i),\hat z_j(t_i)\big)
    \right\|_2^2.
\end{equation}

To evaluate the accuracy of the learned ODE system~\eqref{example2_approximate}, we compute the average relative errors in both the predicted trajectories and the predicted dynamics:
\begin{equation}
\begin{aligned}
    &\text{Relative error in predicted trajectories} \\
    &\hspace{0.5cm}\coloneqq
    \left(
    \frac{
    \sum_{i=0}^T\sum_{j=1}^N
    \left\|
    \big(x_j(t_i),y_j(t_i),z_j(t_i)\big)
    -
    \big(\hat x_j(t_i),\hat y_j(t_i),\hat z_j(t_i)\big)
    \right\|_2^2
    }{
    \sum_{i=0}^T\sum_{j=1}^N
    \left\|
    \big(x_j(t_i),y_j(t_i),z_j(t_i)\big)
    \right\|_2^2
    }
    \right)^{1/2},
    \\[2mm]
    &\text{Relative error in predicted dynamics} \\
    &\hspace{0.5cm}\coloneqq
    \left(
    \frac{
    \sum_{i=0}^T\sum_{j=1}^N
    \left\|
    \left(
    \frac{\d x_j}{\d t}(t_i),
    \frac{\d y_j}{\d t}(t_i),
    \frac{\d z_j}{\d t}(t_i)
    \right)
    -
    \left(
    \frac{\d \hat x_j}{\d t}(t_i),
    \frac{\d \hat y_j}{\d t}(t_i),
    \frac{\d \hat z_j}{\d t}(t_i)
    \right)
    \right\|_2^2
    }{
    \sum_{i=0}^T\sum_{j=1}^N
    \left\|
    \left(
    \frac{\d x_j}{\d t}(t_i),
    \frac{\d y_j}{\d t}(t_i),
    \frac{\d z_j}{\d t}(t_i)
    \right)
    \right\|_2^2
    }
    \right)^{1/2}.
\end{aligned}
\label{example2_metric}
\end{equation}

    \begin{figure}
    \centering
\includegraphics[width=0.9\linewidth]{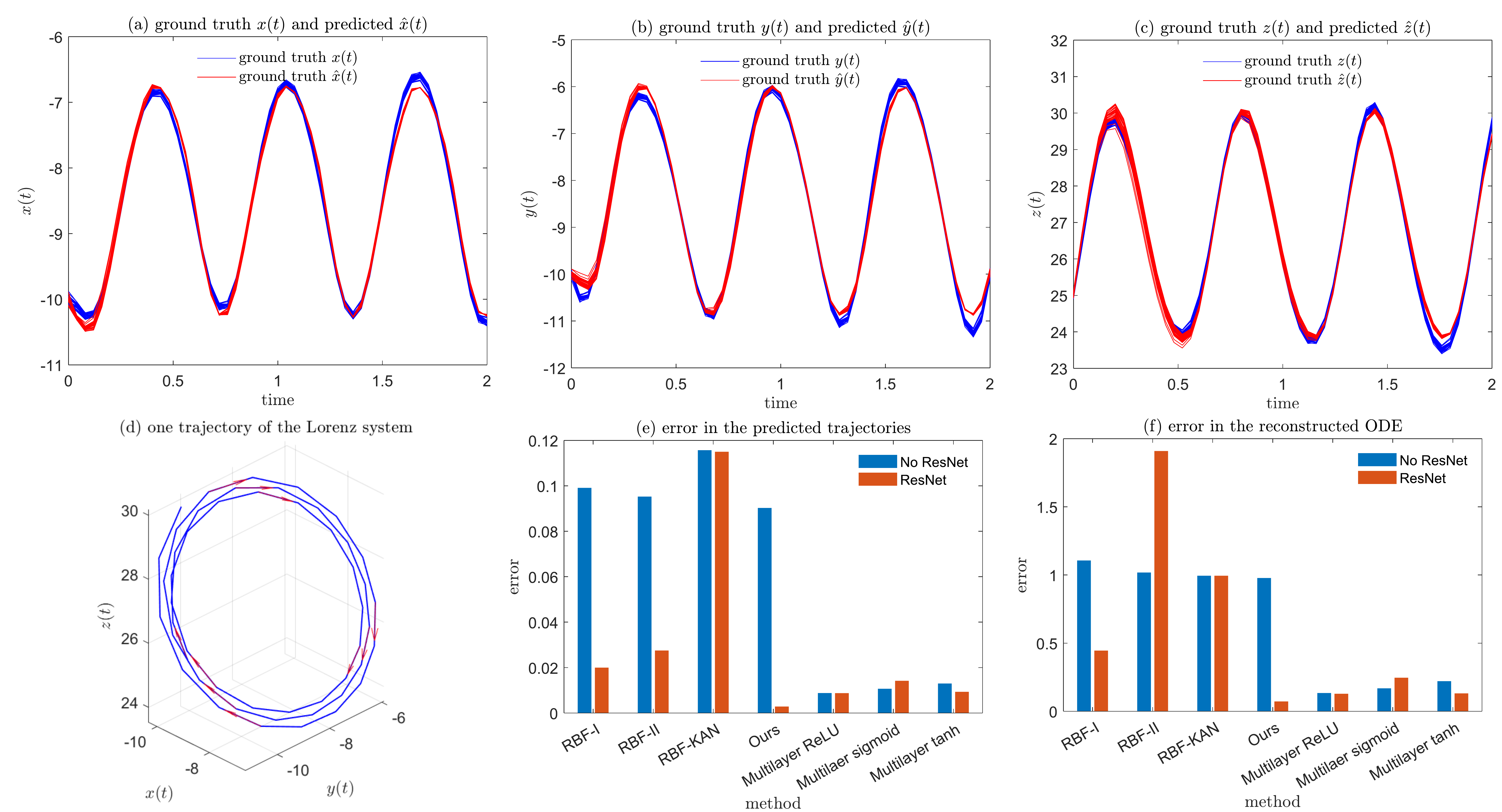}
    \caption{(a)--(c) Comparison between the ground-truth test trajectories $(x(t),y(t),z(t))$ generated by the Lorenz system Eq.~\eqref{example2_model} and the trajectories generated by the learned dynamical system Eq.~\eqref{example2_approximate} using the proposed hierarchical RBF-KAN. (d) A representative trajectory generated by the chaotic Lorenz system Eq.~\eqref{example2_model}. (e)--(f) Average relative errors in the predicted trajectories and learned dynamics, respectively, as defined in Eq.~\eqref{example2_metric}, for different neural network architectures and activation functions (all multilayer perceptrons have four layers).}
    \label{fig:example2}
\end{figure}

As shown in Fig.~\ref{fig:example2}(a)--(c), the proposed hierarchical RBF-KAN with two blocks, equipped with the ResNet technique, accurately reconstructs the Lorenz system Eq.~\eqref{example2_model} and yields reliable trajectory predictions even in the presence of uncertainty in the initial conditions. Both the relative error in the reconstructed trajectories and the relative error in the learned dynamics are the smallest among all considered neural network architectures and activation functions. 
In addition, Fig.~\ref{fig:example2}(e)--(f) shows that several existing RBF-based neural network architectures produce significantly larger errors in the learned dynamics, both with and without the ResNet technique, and in some cases yield larger errors than standard multilayer perceptrons. We again observe that incorporating the ResNet technique is essential for achieving stable and accurate learning within the two-block hierarchical RBF-KAN framework. Overall, the proposed multi-block hierarchical RBF-KAN equipped with the ResNet structure consistently outperforms four-layer multilayer perceptrons using the \texttt{ReLU}, \texttt{sigmoid}, or \texttt{Tanh} activation functions.

\end{example}

Finally, we consider reconstructing a multivariate random field model to evaluate the performance of our proposed hierarchical RBF-SKAN framework, as shown in Fig.~\ref{fig:rbfsnn}.
\begin{example}
    \rm
    \label{example3}

We consider the problem of learning a random field model, which can be viewed as a multivariate stochastic extension of the oscillatory and discontinuous function approximation problem studied in \cite[Subsection~4.1.2]{jiang2022efficient}. The random field is modeled using the proposed hierarchical RBF-SKAN shown in Fig.~\ref{fig:rbfsnn}:
\begin{equation}
    \bm{y}(\bm{x};\bm{\epsilon})
    =
    \bigl(y_1(\bm{x}),\ldots,y_d(\bm{x})\bigr),
    \qquad
    \bm{x}\in\mathbb{R}^d.
\end{equation}
Each component is defined by:
\begin{equation}
   y_j(\bm{x};\bm{\epsilon}) =
\begin{cases}
0.2 \sin\!\left(6\big(\sum_{i=1}^d c_{i,j}x_i\big)\right)
+
\sin\!\left(\sigma\big(\sum_{i=1}^d\epsilon_i\tilde{c}_{i,j}x_i\big)\right),
&
\displaystyle \sum_{i=1}^d c_{i,j}x_i \le 0,
\\[2mm]
1
+
0.1x_j\cos\!\left(12\big(\sum_{i=1}^d c_{i,j}x_i\big)\right)
+
\sin\!\left(\sigma\big(\sum_{i=1}^d \epsilon_i\tilde{c}_{i,j}x_i\big)\right),
&
\displaystyle \sum_{i=1}^d c_{i,j}x_i > 0,
\end{cases}
\label{example3_model}
\end{equation}
where the coefficients $c_{i,j}$ and $\tilde{c}_{i,j}$ are independently sampled from the uniform distribution $\mathcal{U}(1,1.1)$ and remain fixed across all training and testing samples, while \(\epsilon_i\sim\mathcal{N}(0,1)\) are independently sampled for each realization of the random field, and \(\sigma=0.2\) denotes the noise strength.

We employ the hierarchical RBF-SKAN shown in Fig.~\ref{fig:rbfsnn} and train it by minimizing the mini-batch local squared $W_2$ loss function proposed in \cite{xia2025efficient}, which is an effective loss function to minimize for learning random fields under the $W_2$ metric \citep{xia2026local},
with a neighborhood size $\delta=0.1\sqrt{d}$ and a training batch size of $25$. 
To evaluate the reconstruction accuracy of the learned random field model, we compute the average relative errors in the predicted means and standard deviations across different dimensions, averaged over five independent experiments:
\begin{equation}
\begin{aligned}
    &\text{Average relative error in the predicted mean}
    \coloneqq
    \frac{1}{dN_t}
    \sum_{j=1}^{N_t}
    \sum_{i=1}^d
    \frac{
    \left|
    \E[\hat{y}_i(\bm{x}_j;\hat{\bm{\epsilon}})]
    -
    \E[y_i(\bm{x}_j;\bm{\epsilon})]
    \right|
    }{
    \left|
    \E[y_i(\bm{x}_j;\bm{\epsilon})]
    \right|
    },
    \\[2mm]
    &\text{Average relative error in the predicted standard deviation}
    \coloneqq
    \frac{1}{dN_t}
    \sum_{j=1}^{N_t}
    \sum_{i=1}^d
    \frac{
    \left|
    \mathrm{Std}[\hat{y}_i(\bm{x}_j;\hat{\bm{\epsilon}})]
    -
    \mathrm{Std}[y_i(\bm{x}_j;\bm{\epsilon})]
    \right|
    }{
    \left|
    \mathrm{Std}[y_i(\bm{x}_j;\bm{\epsilon})]
    \right|
    },
\end{aligned}
\label{average_error_def}
\end{equation}
where $\hat{\bm{\epsilon}}$ stands for randomness introduced in our hierarchical RBF-SKAN.

{\tiny \begin{table}[htbp]
\centering
\caption{Relative errors (defined in Eq.~\eqref{average_error_def}) in the predicted mean and standard deviation for different methods. Numbers in parentheses refer to the error in the predicted standard deviation. Errors are averaged over 5 repeated experiments.}
\label{tab:three_blocks}
\tiny
\begin{tabular}{ccccccc}
\toprule
Dimension of $\bm{x}$ and $\bm{y}$  & 1 & 2 & 3 & 4 & 5  \\
\midrule                  
\textbf{hierarchical RBF-SKAN (ours)} & \textbf{0.0449(0.2798)}  & \textbf{0.0771(0.2836)}  & \textbf{0.1356(0.2841)}  & \textbf{0.1392(0.2488)}  & \textbf{0.1830(0.2916)} \\
hierarchical RBF-SKAN (deterministic scales) &                0.0474(0.2767) &  0.0776(0.2524)  & 0.1330(0.2690)  & 0.1400(0.2380) &   0.1858(0.3535) \\
CVAE &     0.0180(0.9363)                &  0.0339(0.8998)  &  0.0456(0.9179) &  0.0611(0.9176)  &  0.0666(0.7921) \\
CVAE (RBF) &     0.0185(0.9212)                &  0.0346(0.8776)  &   0.0470(0.9355) &  0.0586(0.9200)  &  0.0664(0.7858) \\
CNF (GELU) & 0.0456(0.3063)  &  0.0740(0.2985) &  0.1299(0.3089)  &  0.1515(0.3316) &  0.1877(0.3573) \\         
CNF (RBF) & 0.0479(0.2815)  &  0.0797(0.2854) &  0.1303(0.3261)  &  0.1498(0.3369) &  0.1919(0.3688) \\
\bottomrule
\end{tabular}
\label{example3_error}
\end{table}}

{\tiny \begin{table}[htbp]
\centering
\caption{Training runtime for different methods (in seconds), averaged over five independent experiments.}
\label{tab:three_blocks}
\tiny
\begin{tabular}{ccccccc}
\toprule
Dimension of $\bm{x}$ and $\bm{y}$  & 1 & 2 & 3 & 4 & 5  \\
\midrule
\textbf{hierarchical RBF-SKAN (ours)} 
& \textbf{541}  
& \textbf{296}  
& \textbf{730}  
& \textbf{2334}  
& \textbf{4238}\\
hierarchical RBF-SKAN (deterministic scales) &    581  &  291 &  827  & 2181 &  4590 \\
CVAE (GELU) &        36.9 &   50.1&   59.4&   67.3 &  79.5\\
CVAE (RBF) &        34.3 &   51.3&   58.7&   67.3 &  76.5\\
CNF (GELU) & 25.5     &  32.3  &  38.7  &  45.0  &  52.7  \\
CNF (RBF) & 24.0     &  31.2  &  36.0   &  46.9  &  46.6  \\
\bottomrule
\end{tabular}
\label{example3_cost}
\end{table}}


We compare our hierarchical RBF-SKAN with the continuous normalizing flow (CNF) approach \citep{papamakarios2017masked, winkler2019learning} and the conditional variational autoencoder (CVAE) approach \citep{lopez2017conditional, kingma2013auto} for reconstructing the random field model Eq.~\eqref{example3_model}. For both the CNF and CVAE frameworks, we consider neural networks equipped with either the \texttt{GELU} activation function or the Gaussian-kernel RBF activation function Eq.~\eqref{Gaussian_kernel}, together with the ResNet technique.
The CVAE approach fails to accurately capture the standard deviation of the random field model, while the average error in the predicted standard deviation for the CNF approach increases as the dimensionality $d$ grows. Furthermore, for both the CVAE and CNF architectures, replacing the \texttt{GELU} activation with the Gaussian-kernel RBF activation Eq.~\eqref{Gaussian_kernel} does not improve the accuracy of the reconstructed means and standard deviations of Eq.~\eqref{example3_model}.
In contrast, the proposed hierarchical RBF-SKAN achieves the smallest error in the predicted standard deviation, particularly in higher-dimensional settings, where its performance remains comparatively stable as $d$ increases. Although the average error in the predicted mean increases moderately with dimensionality, this behavior is primarily attributed to the increasing sparsity of the training data in multidimensional spaces. As an additional experiment, we enforce the scale parameters in the second module of the hierarchical RBF-SKAN shown in Fig.~\ref{fig:rbfsnn} to be deterministic, corresponding to the special case considered in the proof of Theorem~\ref{theorem2}. We observe that introducing stochasticity into the scale parameters does not lead to a noticeable increase in computational cost. However, allowing the scales of the RBFs in the second module of the hierarchical RBF-SKAN to be stochastic improves the accuracy of the predicted standard deviation when \(d=5\).
Finally, the training runtime of the hierarchical RBF-SKAN is longer than that of the CVAE and CNF methods for large $d$. This additional computational cost arises because evaluating the squared \(W_2\) distance between two empirical distributions with \(N\) samples requires \(\mathcal{O}(N^3\log N)\) operations, which is more expensive than the loss evaluations used in the CVAE and CNF frameworks. Moreover, increasing the dimensionality \(d\) results in a more complex architecture for the proposed RBF-SKAN, which may further increase computational runtime.
As a potential future direction, it would be beneficial to replace the local squared $W_2$ loss with an entropy-regularized Sinkhorn distance to further reduce the computational complexity \citep{cuturi2013sinkhorn}.
\end{example}

\section{Summary and conclusion}
\label{section4}

In this manuscript, we proposed a hierarchical RBF-KAN architecture for efficiently learning multidimensional deterministic functions and a hierarchical RBF-SKAN framework for learning random fields. From a theoretical perspective, we analyzed the approximation properties of the proposed hierarchical RBF-KAN and showed that it has the potential to partially alleviate the curse of dimensionality in multivariate function approximation. Furthermore, we established that the proposed hierarchical RBF-SKAN framework possesses the capability to approximate random field models given mild conditions under the Wasserstein metric. Empirically, the proposed hierarchical RBF-KAN consistently outperformed several prevailing RBF-based neural network architectures, including the naive RBF-KAN framework recently proposed in \cite{chao2026rbf}, as well as neural networks employing alternative activation functions. In addition, the proposed hierarchical RBF-SKAN demonstrated better performance compared with existing continuous normalizing flow (CNF) and conditional variational autoencoder (CVAE) approaches for random field learning tasks.

There are several promising directions for future research. First, it would be valuable to investigate how deeper architectures, \textit{i.e.}, increasing the number of blocks in the hierarchical RBF-KAN or hierarchical RBF-SKAN frameworks, could further improve approximation accuracy for multidimensional functions and random fields. Second, integrating the proposed hierarchical RBF-KAN structures as modular building blocks within more sophisticated machine learning architectures, such as transformer-based models, represents another promising direction. Finally, the current hierarchical RBF-KAN and hierarchical RBF-SKAN frameworks are primarily designed for regression-type problems involving continuous outputs. Extending the proposed hierarchical RBF-KAN framework and RBF-SKAN framework to classification tasks and other discrete-output learning problems is therefore an important topic for future investigation.

\printcredits

\section*{Declaration of competing interest}
The authors declare that they have no known competing financial interests or personal relationships that could have appeared to influence the work reported in this paper.

\bibliographystyle{cas-model2-names}

\bibliography{bibliography}

\appendix

\section*{Appendix}
\section{Proof of Theorem~\ref{theorem1}}
\label{proof_theorem1}
Here, we provide proof of Theorem~\ref{theorem1}. Given a scalar-valued continuous function \(u(\bm{x})\), we consider its Kolmogorov--Arnold representation in Eq.~\eqref{KAN_expression}, where the functions \(\phi_{q,p}\) and \(\Phi_q\) satisfy the continuity conditions in Eqs.~\eqref{uniform1}--\eqref{uniform3}. For any \(1>\varepsilon>0\), we first define the mollified function:
\begin{equation}
    \phi_{q,p}^{\varepsilon}(x_p)
    :=
    \int_{\mathbb{R}}
    \tilde{\phi}_{q,p}(y)\,
    B_{\varepsilon^2}(x_p-y)
    \, \d y,
\end{equation}
where
\[
B_{\varepsilon^2}({x})
\coloneqq
\frac{1}{\sqrt{2\pi\varepsilon^4}}
\exp\!\left(-\frac{{x}^2}{2\varepsilon^4}\right)
\]
is the scaled and normalized Gaussian kernel RBF. 
We let $\tilde{\phi}_{q,p}$ be a $C^0$ extension of $\phi_{q,p}$ with compact support, defined by
\begin{equation}
\tilde{\phi}_{q,p}(x_p)=
\left\{
\begin{aligned}
    &\phi_{q,p}(x_p), && x_p\in[-1,1],\\[2mm]
    &\phi_{q,p}(1), && x_p\in[1,1+\varepsilon],\\[2mm]
    &\frac{\left(\left|\frac{\phi_{q,p}(1)}{\delta_{\phi_{q,p}}(\varepsilon)/\varepsilon}\right|-x_p+1+\varepsilon\right)}
    {\left|\frac{\phi_{q,p}(1)}{\delta_{\phi_{q,p}}(\varepsilon)/\varepsilon}\right|}
    \,\phi_{q,p}(1), && x_p\in[1+\varepsilon,\,x_{q,p}^2],\\[2mm]
    &\phi_{q,p}(-1), && x_p\in[-1-\varepsilon,-1],\\[2mm]
    &\frac{\left(\left|\frac{\phi_{q,p}(-1)}{\delta_{\phi_{q,p}}(\varepsilon)/\varepsilon}\right|+x_p+1+\varepsilon\right)}
    {\left|\frac{\phi_{q,p}(-1)}{\delta_{\phi_{q,p}}(\varepsilon)/\varepsilon}\right|}
    \,\phi_{q,p}(-1), && x_p\in[-x_{q,p}^1,\,-1-\varepsilon],\\[2mm]
    &0, && x_p>x_{q,p}^2 \ \text{or}\ x_p<-x_{q,p}^1,
\end{aligned}
\right.
\end{equation}
where 
\begin{equation}
    x_{q,p}^1 \coloneqq \left|\frac{\phi_{q,p}(-1)}{\delta_{\phi_{q,p}}(\varepsilon)/\varepsilon}\right| + \varepsilon + 1,
    \qquad
    x_{q,p}^2 \coloneqq \left|\frac{\phi_{q,p}(1)}{\delta_{\phi_{q,p}}(\varepsilon)/\varepsilon}\right| + \varepsilon + 1.
\end{equation}
Then, $\tilde{\phi}_{q,p}(x_p)$ satisfies
\[
\|\tilde{\phi}_{q,p}\|_{\infty}\leq \|\phi_{q,p}\|_{\infty},
\qquad
\delta_{\phi_{q,p}}(\varepsilon)\geq \delta_{\tilde{\phi}_{q,p}}(\varepsilon).
\]
Moreover, let $x_{q,p}\coloneqq \max\{x_{q,p}^1,x_{q,p}^2\}$.
Then $\tilde{\phi}_{q,p}(x_p)$ is compactly supported on $(-x_{q,p},x_{q,p})$. Furthermore, we can verify that
\begin{equation}
    |\phi_{q,p}^{\varepsilon}(x_p)|
    \leq
    B_{\varepsilon^2}(|x_p|-x_{q,p})\, 2x_{q,p}\|\phi_{q,p}\|_{\infty},
    \qquad |x_p|>x_{q,p}.
\label{out_bound}
\end{equation}

Additionally, for any $x_p\in\mathbb{R}$, we have
\begin{equation}
\begin{aligned}
    \big|\phi_{q,p}^{\varepsilon}(x_p)-\tilde{\phi}_{q,p}(x_p)\big|
    &\leq
    \left|
    \tilde{\phi}_{q,p}(x_p)
    -
    \int_{\mathbb{R}}
    \tilde{\phi}_{q,p}(y)\,
    B_{\varepsilon^2}(x_p-y)\,\d y
    \right| \\
    &\leq
    2\|\phi_{q,p}\|_{\infty}\bigl(1-\Psi(\varepsilon^{-1})\bigr)
    + \delta_{\phi_{q,p}}(\varepsilon),
\end{aligned}
\label{rbf_error_clean}
\end{equation}
where 
\[
\Psi(r)
:=
\int_{-r}^r
B_{1}({z})\, \d{z}
\]
denotes the Gaussian mass inside $[-r, r]$. The first term in Eq.~\eqref{rbf_error_clean} corresponds to the local smoothing error, while the second term accounts for the Gaussian tail truncation. Furthermore, it can be verified that
\[
\delta_{\phi_{q,p}^{\varepsilon}}(\varepsilon)\leq \delta_{\phi_{q,p}}(\varepsilon)
\qquad\text{and}\qquad
\|\phi_{q,p}^{\varepsilon}\|_{\infty}\leq \|\phi_{q,p}\|_{\infty}.
\]
Next, we define
\begin{equation}
    \phi_{q,p}^{\varepsilon,2}(x_p)
    :=
    \int_{\mathbb{R}}
    \phi_{q,p}^{\varepsilon}(y)\,
    B_{\varepsilon^2}(x_p-y)
    \, \d y.
\end{equation}
Therefore, we have
\begin{equation}
\begin{aligned}
    \big|\phi_{q,p}^{\varepsilon,2}(x_p) - \tilde{\phi}_{q,p}(x_p)\big|
    &\leq
    \big|\phi_{q,p}^{\varepsilon}(x_p) - \tilde{\phi}_{q,p}(x_p)\big|
    +
    \big|\phi_{q,p}^{\varepsilon,2}(x_p) - \phi_{q,p}^{\varepsilon}(x_p)\big| \\
    &\leq
    4\|\phi_{q,p}\|_{\infty}\bigl(1-\Psi(\varepsilon^{-1})\bigr)
    + 2\delta_{\phi_{q,p}}(\varepsilon).
\end{aligned}
\label{estimate1}
\end{equation}

Let the scaled one-dimensional Clenshaw--Curtis points be defined as
\begin{equation}
    X_{m_i}
    :=
    \left\{
    x_{q,p,j}
    =
    -(x_{q,p}+1)\cos\!\left(\frac{\pi (j-1)}{m_i-1}\right),
    \quad j=1,\dots,m_i
    \right\},
\end{equation}
with $m_i = 2^{i-1}+1$ for $i>1$. Using the Smolyak grid construction (see \cite{barthelmann2000high}), we let $I_N$ denote the one-dimensional interpolation operator with $N$ collocation points. Then, for any $v \in F_k$ and $x_p \in (-x_{q,p}-1,\, x_{q,p}+1)$, it follows from \cite[Theorem~8]{barthelmann2000high} that
\begin{equation}
\|v - I_N v\|_{\infty}
\le
c_k \bigl(2(x_{q,p}+1)\bigr)^k
N^{-k/2}\log N\,
\|v\|_{\infty,k}.
\label{sparse_est}
\end{equation}

For each $\phi_{q,p}^{\varepsilon}(x_p)$ and any $x_p\in[-x_{q,p},x_{q,p}]$, we approximate it by
\begin{equation}
    \phi_{q,p,N}(x_p)
    :=
    \sum_{i=1}^N
    w_i\,
    \phi_{q,p}^{\varepsilon}(x_{q,p,i})\,
    B_{\varepsilon^2}(x_p-x_{q,p,i}),
\end{equation}
where $w_i$ are the quadrature weights associated with the grid, chosen so that
\begin{equation}
    \sum_{i=1}^N
    w_i\,
    \phi_{q,p}^{\varepsilon}(x_{q,p,i})\,
    B_{\varepsilon^2}(x_p-x_{q,p,i})
    =
    \int_{[-x_{q,p}-1,\;x_{q,p}+1]}
    I_N\!\left(\phi_{q,p}^{\varepsilon}(y)B_{\varepsilon^2}(x_p-y)\right)\d y,
\end{equation}
for any $x_p\in[-x_{q,p},x_{q,p}]$. Let $x_p\in[-x_{q,p},x_{q,p}]$. We decompose the approximation error as
\begin{equation}
\begin{aligned}
    \big|\phi_{q,p}(x_p)-\phi_{q,p,N}(x_p)\big|
    \leq\;
    \big|\phi_{q,p}(x_p)-\phi_{q,p}^{\varepsilon,2}(x_p)\big|
    +
    \big|\phi_{q,p}^{\varepsilon,2}(x_p)-\phi_{q,p,N}(x_p)\big|.
\end{aligned}
\label{error_bound}
\end{equation}
The first term in Eq.~\eqref{error_bound} is controlled by Eq.~\eqref{rbf_error_clean}. For the second term, plugging in the inequalities~\eqref{sparse_est} and \eqref{out_bound}, we have: 
\begin{equation}
\begin{aligned}
    \big|\phi_{q,p}^{\varepsilon,2}(x_p)-\phi_{q,p,N}(x_p)\big|
    &\leq
    \Big|
    \int_{\mathbb{R}}
    \phi_{q,p}^{\varepsilon}(y)B_{\varepsilon^2}(x_p-y)\d y
    -
    \sum_{i=1}^N
    w_i\,
    \phi_{q,p}^{\varepsilon}(x_{q,p,i})\,
    B_{\varepsilon^2}(x_p-x_{q,p,i})
    \Big| \\
    &=
    \big|
    \int_{\mathbb{R}}
    \phi_{q,p}^{\varepsilon}(y)B_{\varepsilon^2}(x_p-y)\d y
    -
    \int_{[-x_{q,p}-1,\;x_{q,p}+1]}
    I_N\!\left(\phi_{q,p}^{\varepsilon}(y)B_{\varepsilon^2}(x_p-y)\right)\d y
    \big| \\
    &\leq
    c_{d,k}\,2^k\bigl(2(x_{q,p}+1)\bigr)^k
    N^{-k/2}\log N\,
    \|\phi_{q,p}^{\varepsilon}\|_{\infty,k}\,
    \|B_{\varepsilon^2}\|_{\infty,k} 
    \\&\qquad+
    \int_{\mathbb{R}\setminus[-x_{q,p}-1,\;x_{q,p}+1]}
    \phi_{q,p}^{\varepsilon}(y)B_{\varepsilon^2}(x_p-y)\d y \\
    &\leq
    c_{d,k}\bigl(4(x_{q,p}+1)\bigr)^k
    N^{-k/2}\log N\,
    \|\phi_{q,p}^{\varepsilon}\|_{\infty,k}\,
    \|B_{\varepsilon^2}\|_{\infty,k}
    +
    2x_{q,p}\bigl(1-\Psi(\varepsilon^{-1})\bigr)\|\phi_{q, p}^{\varepsilon}\|_{\infty}.
\end{aligned}
\label{ineq_deterministic}
\end{equation}

Using the Gaussian scaling property, we have
\[
\|B_{\varepsilon^2}\|_{\infty,k}
=
\varepsilon^{-2k}\|B_{1}\|_{\infty,k}.
\]
Furthermore,
\begin{equation}
\|\phi_{q,p}^{\varepsilon}\|_{\infty,k}
\leq
\|\phi_{q,p}\|_{\infty}\,\varepsilon^{2k}
\int_{\mathbb{R}}
\left|
\frac{\d^k B_{1}}{\d x^k}
\right|
\d x.
\end{equation}

We define
\[
\tilde{c}_{d,k}
\coloneqq
c_{d,k}\int_{\mathbb{R}}
\left|
\frac{\d^k B_{1}}{\d x^k}
\right|
\d x,
\]
where $c_{d,k}$ is the constant appearing in Eq.~\eqref{ineq_deterministic}. Therefore, we obtain
\begin{equation}
\begin{aligned}
\big|\phi_{q,p}^{\varepsilon,2}(x_p)-\phi_{q,p,N}(x_p)\big|
&\leq
\tilde c_{d,k}\,
\bigl(4(x_{q,p}+1)\bigr)^k
N^{-k/2}\varepsilon^{4k}
\log N\,
\|\phi_{q,p}\|_{\infty}\,
\|B_{1}\|_{\infty,k}^2 +
2x_{q,p}\bigl(1-\Psi(\varepsilon^{-1})\bigr)\,
\|\phi_{q,p}\|_{\infty}.
\end{aligned}
\label{estimate2}
\end{equation}

Combining Eqs.~\eqref{estimate1} and~\eqref{estimate2}, for any $x_p\in[-1,1]$, we obtain
\begin{equation}
\begin{aligned}
    \big|\phi_{q,p}(x_p)-\phi_{q,p,N}(x_p)\big|
    &\leq
    4\|\phi_{q,p}\|_{\infty}\bigl(1-\Psi(\varepsilon^{-1})\bigr)
    +2\delta_{\phi_{q,p}}(\varepsilon) \\
    &\hspace{-2cm}
    +\tilde c_{d,k}\bigl(4(x_{q,p}+1)\bigr)^k
    N^{-k/2}\varepsilon^{4k}\log N\,
    \|\phi_{q,p}\|_{\infty}\,\|B_{1}\|_{\infty,k} 
    +2x_{q,p}\bigl(1-\Psi(\varepsilon^{-1})\bigr)\|\phi_{q,p}\|_{\infty}.
\end{aligned}
\label{phi_bound}
\end{equation}
Therefore, as $\varepsilon\to 0$ and $N>N_{\varepsilon}\coloneqq \varepsilon^{-10}\to\infty$, the right-hand side of~\eqref{phi_bound} converges to $0$ for all $x_p\in(-1,1)$. In particular, we choose the pair $(\varepsilon,N)$ such that:
\begin{equation}
    \sum_{p=1}^d
    \big|\phi_{q,p}(x_p)-\phi_{q,p,N}(x_p)\big|
    < 1,
    \qquad \forall\,p,q.
\end{equation}
We denote $\tilde{a}_q \coloneqq \max_{1\leq p\leq d}\sum_{p=1}^d |\phi_{q,p}(x_p)|$.
For each $\Phi_q(x)$, without loss of generality, we assume that it has compact support on some interval $(-a_q,a_q)$ with $a_q>\tilde{a}_q$. 
Therefore, there exists an RBF approximation $\Phi_{q,N}(x)$ such that, for any $x\in(-a_q-1,a_q+1)$,
\begin{equation}
\begin{aligned}
|{\Phi}_q(x)-\Phi_{q,N}(x)|
&\leq
4\|\Phi_q\|_{\infty}\bigl(1-\Psi(\varepsilon^{-1})\bigr)
+2\delta_{\Phi_q}(\varepsilon) \\
&\hspace{0.5cm}
+\tilde c_{d,k}\bigl(4(a_q+2)\bigr)^k
\varepsilon^{4k} N^{-k/2}\log N\,
\|\Phi_q\|_{\infty}\,\|B_{1}\|_{\infty,k}
+2(a_q+1)\bigl(1-\Psi(\varepsilon^{-1})\bigr)\|\Phi_q\|_{\infty}.
\end{aligned}
\end{equation}

Finally, for $\bm{x}\coloneqq (x_1,\ldots,x_d)\in[0,1]^d$, we have:
\begin{equation}
\begin{aligned}
    &\left|u(\bm{x})-\sum_{q=0}^{2d}\Phi_{q,N}\!\left(\sum_{p=1}^d \phi_{q,p,N}(x_p)\right)\right| \\
    &\leq
    \left|
    \sum_{q=0}^{2d}\Phi_q\!\left(\sum_{p=1}^d \phi_{q,p,N}(x_p)\right)
    -
    \sum_{q=0}^{2d}\Phi_{q,N}\!\left(\sum_{p=1}^d \phi_{q,p,N}(x_p)\right)
    \right|  +
    \left|
    \sum_{q=0}^{2d}\Phi_q\!\left(\sum_{p=1}^d \phi_{q,p}(x_p)\right)
    -
    \sum_{q=0}^{2d}{\Phi}_q\!\left(\sum_{p=1}^d \phi_{q,p,N}(x_p)\right)
    \right| \\
    &\leq
    \sum_{q=0}^{2d}
    \delta_{\Phi_q}\!\left(\sum_{p=1}^d \left|\phi_{q,p}(x_p)-\phi_{q,p,N}(x_p)\right|\right)  +
    \sum_{q=0}^{2d}
    \Bigg(
    4\|\Phi_q\|_{\infty}\bigl(1-\Psi(\varepsilon^{-1})\bigr)
    +2\delta_{\Phi_q}(\varepsilon) \\
    &\qquad\qquad
    +\tilde c_{d,k}\bigl(4(a_q+2)\bigr)^k \varepsilon^{4k} N^{-k/2}\log N\,
    \|\Phi_q\|_{\infty}\,\|B_{1}\|_{\infty,k}+2(a_q+1)\bigl(1-\Psi(\varepsilon^{-1})\bigr)\|\Phi_q\|_{\infty}
    \Bigg).
\end{aligned}
\label{thm1_bound}
\end{equation}

For any fixed $\bm{x}\in(0,1)^d$, as $\varepsilon\to 0$ and $N>N_{\varepsilon}\coloneqq \varepsilon^{-10}\to\infty$, the last two terms on the right-hand side of Eq.~\eqref{thm1_bound} converge uniformly to zero. Moreover, $a_q(N,\varepsilon)\to a_q\coloneqq \sup_{x_p}\sum_{p=1}^{2d+1} \phi_{q,p}(x_p)$ uniformly. In addition, for each $q$, we have
\[
\sum_{p=1}^d \left|\phi_{q,p}(x_p)-\phi_{q,p,N}(x_p)\right|\to 0
\]
as $\varepsilon\to 0$ and $N>N_{\varepsilon}\coloneqq \varepsilon^{-10}\to\infty$. Note that the approximation $\sum_{q=0}^{2d}\Phi_{q,N}\!\left(\sum_{p=1}^d \phi_{q,p,N}(x_p)\right)$ can be represented using a one-block hierarchical RBF-KAN in Fig.~\ref{fig:rbfnn} with $\ell=N$ (redundant coefficients in the dense linear layers of the hierarchical RBF-KAN can be set to 0).
This completes the proof of Theorem~\ref{theorem1}.

\section{Proof of Theorem~\ref{theorem2}}
\label{proof_theorem2}
For an integer \(N\ge 1\), let $h := \frac{2}{N}$ be the side length of each cell, and let \(\bm{k}=(k_1,\dots,k_d)\) with \(k_i\in\{1,\dots,N\}\) denote a multi-index. We partition \([-1,1]^d\) into \(N^d\) closed cubes
\[
I_{\bm{k}}
:=\prod_{i=1}^d \Big[-1+(k_i-1)h,\; -1+k_i h\Big].
\]
Fix a parameter \(\varepsilon>0\) such that \(\varepsilon<\tfrac{h}{2}\). Then, for each cell \(I_{\bm{k}}\), there exists a smooth \(u_{\bm{k}}^{\varepsilon}\in C^\infty(\mathbb{R}^d)\) of the indicator function \(\mathbb{I}_{\bm{x}\in I_{\bm{k}}}\) such that:
\begin{enumerate}
    \item \textbf{Interior value.} \(u_{\bm{k}}^{\varepsilon}(\bm{x}) = 1-\varepsilon\) for all \(\bm{x}\in I_{\bm{k}}\) satisfying \(\operatorname{dist}(\bm{x},\partial I_{\bm{k}})\ge \varepsilon\).
    \item \textbf{Support.} \(u_{\bm{k}}^{\varepsilon}(\bm{x}) = 0\) for all \(\bm{x}\notin I_{\bm{k}}\).
    \item \textbf{Range.} \(0 \le u_{\bm{k}}^{\varepsilon}(\bm{x}) \le 1-\varepsilon\) for all \(\bm{x}\in\mathbb{R}^d\).
    \item \textbf{Gradient bound.} There exists a constant \(C>0\), independent of \(N\) and \(\varepsilon\), such that
    \[
    \|\nabla u_{\bm{k}}^{\varepsilon}\|_{L^\infty(\mathbb{R}^d)} \le \frac{C}{\varepsilon}.
    \]
\end{enumerate}


Then, by Corollary~\ref{col1}, for each $\bm{k}$, there exists a hierarchical RBF-KAN, denoted as $R^1_{\bm{k}}$, whose outputs are denoted by \(z_{\bm{k}}(\bm{x})\) such that
\[
\|u_{\bm{k}}^{\varepsilon}(\bm{x})-z_{\bm{k}}(\bm{x})\|_{\infty}\leq \varepsilon.
\]

For each \(\bm{k}\), let \(f_{\bm{x}_{\bm{k}}}(\bm{y})\) denote the probability density function of \(\bm{y}(\bm{x}_{\bm{k}};\omega)\). By the generalized Moser theorem \citep{greene1979diffeomorphisms}, suppose \(\bm{\omega}_{\bm{k}}\sim \mathcal{U}((0,1)^n)\) and \(\bm{y}(\bm{x}_{\bm{k}};\omega)\in(0,1)^n\). Then, there exists a diffeomorphism
$\psi_{\bm{k}}:[0, 1]^n\to[0, 1]^n$
such that
\[
f_{\bm{x}_{\bm{k}}}\bigl(\psi_{\bm{k}}(\bm{\omega}_{\bm{k}})\bigr)
\left|
\det\!\left(
\frac{\mathrm{D}\psi_{\bm{k}}(\bm{\omega}_{\bm{k}})}
{\mathrm{D}\bm{\omega}_{\bm{k}}}
\right)
\right|
=
1,
\]
where $\frac{\mathrm{D}\psi_{\bm{k}}}
{\mathrm{D}\bm{\omega}_{\bm{k}}}$
denotes the Jacobian matrix of \(\psi_{\bm{k}}\).
Consequently, the random variable \(\psi_{\bm{k}}(\bm{\omega}_{\bm{k}})\) has probability density function \(f_{\bm{x}_{\bm{k}}}(\cdot)\), which coincides with the probability density function of \(\bm{y}(\bm{x}_{\bm{k}};\omega)\).

The Kolmogorov--Arnold representation in Eq.~\eqref{KAN_expression} holds for each component of the diffeomorphism \(\psi_{\bm{k}}(\bm{\omega}_{\bm{k}})\) for every \(\bm{k}\). Then, by Corollary~\ref{col1}, for any \(\epsilon_0>0\) and any \(\bm{k}\), there exists a vector-valued hierarchical RBF-KAN, denoted by \(R_{\bm{k}}^2\), with output \(\hat{\psi}_{\bm{k}}\) approximating \(\psi_{\bm{k}}\) such that:
\begin{equation}
\label{component_uniform_error}
\bigl\|
\psi_{\bm{k}}(\bm{\omega})
-
\hat{\psi}_{\bm{k}}(\bm{\omega})
\bigr\|_{\infty}
\leq
\epsilon_0.
\end{equation}
Therefore, $\|\hat{\psi}_{\bm{k}}(\omega)\|_{\infty}\leq1+\epsilon_0$.

Given any \(\bm{x}\in I_{\bm{k}}\) satisfying
$\operatorname{dist}(\bm{x},\partial I_{\bm{k}})\ge \varepsilon$, consider the measurable mapping
$\bm{\omega}
\mapsto
\bigl(
\psi_{\bm{k}}(\bm{\omega}),
\hat{\psi}_{\bm{k}}(\bm{\omega})
\bigr)
\in
\mathbb{R}^n\times\mathbb{R}^n$. Let \(\nu\) denote the probability law of \(\bm{\omega}\). We define the coupling (pushforward) measure \(\pi\) on \(\mathbb{R}^n\times\mathbb{R}^n\) by
\[
\pi(A\times \hat{A})
:=
\nu
\Bigl(
\big\{
\bm{\omega}
:
\psi_{\bm{k}}(\bm{\omega})\in A,
\;
\hat{\psi}_{\bm{k}}(\bm{\omega})\in \hat{A}
\big\}
\Bigr),
\]
for Borel sets \(A,\hat{A}\subset\mathbb{R}^n\). By construction, the marginals of \(\pi\) are precisely the probability laws of \(\psi_{\bm{k}}(\bm{\omega})\) and \(\hat{\psi}_{\bm{k}}(\bm{\omega})\), which we denote by \(f_{\bm{x}}\) and \(\hat{f}_{\bm{x}}\), respectively.

Using \(\pi\) as an admissible coupling, the squared \(W_2\) distance satisfies
\begin{equation}
\begin{aligned}
W_2^2(f_{\bm{x}_{\bm{k}}},\hat{f}_{\bm{x}_{\bm{k}}})
&\leq
\int_{\mathbb{R}^n\times\mathbb{R}^n}
\|\bm{y}-\hat{\bm{y}}\|^2
\,\pi(\d\bm{y},\d\hat{\bm{y}}) =
\int_{[0,1]^n}
\bigl\|
\psi_{\bm{k}}(\bm{\omega})
-
\hat{\psi}_{\bm{k}}(\bm{\omega})
\bigr\|^2
\,\d\bm{\omega} \leq
n\epsilon_0^2.
\end{aligned}
\label{W2_basic}
\end{equation}
For any $\bm{x}\in I_{\bm{k}, \epsilon} \coloneqq \prod_{i=1}^d \Big[-1+(k_i-1)h+\varepsilon,\; -1+k_i h-\varepsilon\Big]$, we have:
\begin{equation}
\begin{aligned}
    \E_{\bm{\omega}_{\bm{k}} }
    \big[
    \|\psi(\bm{\omega}_{\bm{k}})
    -
    z_{\bm{k}}(\bm{x})\hat{\psi}(\bm{\omega}_{\bm{k}})
    \|^2
    \big]
    &\leq
    2\E_{\bm{\omega}_{\bm{k}} }
    \big[
    \|
    z_{\bm{k}}(\bm{x})\psi(\bm{\omega}_{\bm{k}})
    -
    z_{\bm{k}}(\bm{x})\hat{\psi}(\bm{\omega}_{\bm{k}})
    \|^2
    \big]
    +
    2\E_{\bm{\omega}_{\bm{k}} }
    \big[
    \|
    \psi(\bm{\omega}_{\bm{k}})
    -
    z_{\bm{k}}(\bm{x})\psi(\bm{\omega}_{\bm{k}})
    \|^2
    \big]\\
    &\leq
    2(1+\varepsilon)^2 n\epsilon_0^2
    +
    2\E_{\bm{\omega}_{\bm{k}} }
    \big[
    \|
    \psi(\bm{\omega}_{\bm{k}})
    -
    z_{\bm{k}}(\bm{x})\psi(\bm{\omega}_{\bm{k}})
    \|^2
    \big]\\
    &\leq
    2(1+\varepsilon)^2 n\epsilon_0^2
    +
    2n\|\bm{y}_{\bm{x}}\|_{\infty}^2\varepsilon^2.
\end{aligned}
\label{intermediate1}
\end{equation}

Finally, by stacking \(\{R_{\bm{k}}^1\}\) as the first building block and \(\{R_{\bm{k}}^2\}\) as the second building block for all $\bm{k}$, we can obtain an RBF-SKAN with the structure shown in Fig.~\ref{fig:rbfsnn}, whose output is given by
\begin{equation}
    \hat{\bm{y}}_{\bm{x}}
    \coloneqq
    \sum_{\bm{k}}
    z_{\bm{k}}(\bm{x})
    \hat{\psi}_{\bm{k}}(\bm{\omega}_{\bm{k}})
\end{equation}
with the associated probability density function denoted by \(\hat{f}_{\bm{x}}\). For each $\bm{x}\in I_{\bm{k},\varepsilon}$, consider the coupling
\[
(\bm{y},\hat{\bm{y}})
\sim
\left(
\psi_{\bm{k}}(\bm{\omega}_{\bm{k}}),
\;
\sum_{\bm{j}}
z_{\bm{j}}(\bm{x})
\hat{\psi}_{\bm{j}}(\bm{\omega}_{\bm{j}})
\right).
\]
Then, using the inequality~\eqref{intermediate1}, we have
\begin{equation}
\begin{aligned}
    W_2^2(f_{\bm{x}},\hat{f}_{\bm{x}})
    &\leq
    2W_2^2(f_{\bm{x}_{\bm{k}}},\hat{f}_{\bm{x}})
    +
    2W_2^2(f_{\bm{x}},f_{\bm{x}_{\bm{k}}})
    \\
    &\leq
    2\E\!\Big[
    \big\|
    \psi_{\bm{k}}(\bm{\omega}_{\bm{k}})
    -
    \sum_{\bm{j}}
    z_{\bm{j}}(\bm{x})
    \hat{\psi}_{\bm{j}}(\bm{\omega}_{\bm{j}})
    \big\|^2
    \Big]
    +
    2L^2h^2
    \\
    &\leq
    4\E\!\Big[
    \big\|
    \psi_{\bm{k}}(\bm{\omega}_{\bm{k}})
    -
    z_{\bm{k}}(\bm{x})
    \hat{\psi}_{\bm{k}}(\bm{\omega}_{\bm{k}})
    \big\|^2
    \Big]
    +
    4\E\!\Big[
    \big\|
    \sum_{\bm{j}\neq \bm{k}}
    z_{\bm{j}}(\bm{x})
    \hat{\psi}_{\bm{j}}(\bm{\omega}_{\bm{j}})
    \big\|^2
    \Big]
    +
    2L^2h^2
    \\
    &\leq
    4\Big(
    2(1+\varepsilon)^2 n\epsilon_0^2
    +
    2n\|y_{\bm{x}}\|_{\infty,0}^2\varepsilon^2
    \Big)
    +
    4nN^{2d}\varepsilon^2
    \big(
    1
    +
    \epsilon_0
    \big)^2
    +
    2L^2h^2.
\end{aligned}
\end{equation}

When $\bm x\in\Omega\setminus\Omega_{\varepsilon}$ where $\Omega_{\varepsilon}\coloneqq \bigcup_{\bm{k}}I_{\bm{k}, \varepsilon}$, since $\sum_{i=1}^{\bm{k}}|z_{\bm{k}}|\leq (1+N^d\epsilon)$, we have:
\begin{equation}
\begin{aligned}
    \E[\|\hat{\bm{y}}_{\bm{x}}\|^2]\leq n(1+N^d\varepsilon)^2\|\hat\psi_{\bm k}\|^2\leq n(1+N^d\varepsilon)^2(1+\epsilon_0)^2.
    \end{aligned}
\end{equation}
Therefore, we conclude that:
\begin{equation}
\begin{aligned}
    \int_{\Omega} W_2^2(f_{\bm{x}}, \hat{f}_{\bm{x}})
    \,\nu(\d\bm{x}) &\leq
    \int_{\Omega_{\varepsilon}}
    W_2^2(f_{\bm{x}}, \hat{f}_{\bm{x}})
    \,\nu(\d\bm{x})
    + 2\nu(\Omega\setminus\Omega_{\varepsilon})
    \sup_{\bm{x}}
    \Big(
    \E[\|\hat{\bm{y}}_{\bm{x}}\|^2]
    +
    \E[\|{\bm{y}}_{\bm{x}}\|^2]
    \Big)\\
    &\leq
    \sum_{\bm{k}}
    \int_{\Omega_{\varepsilon,\bm{k}}}
    W_2^2(f_{\bm{x}}, \hat{f}_{\bm{x}})
    \,\nu(\d\bm{x})
    +
    2\nu(\Omega\setminus\Omega_{\varepsilon})
    \sup_{\bm{x}}
    \Big(
    \E[\|\hat{\bm{y}}_{\bm{x}}\|^2]
    +
    \E[\|{\bm{y}}_{\bm{x}}\|^2]
    \Big)\\
    &\leq
    4\Big(
    2(1+\varepsilon)^2n\epsilon_0^2
    +
    2n\|y_{\bm{x}}\|_{\infty,0}^2\varepsilon^2
    \Big)
    +
    4N^{2d}n\varepsilon^2(1+\epsilon_0)^2
    +
    2L^2h^2\\
    &\quad
    +
    2n\nu(\Omega\setminus\Omega_{\varepsilon})
    \Big(
    (1+N^d\varepsilon)^2(1+\epsilon_0)^2
    +
    1
    \Big).
\end{aligned}
\end{equation}
Since \(h\), \(\varepsilon\), and \(\epsilon_0\) are chosen independently and arbitrarily, and since \(\nu(\Omega\setminus\Omega_{\varepsilon})\to 0\) as \(\varepsilon\to 0\), the proof of Theorem~\ref{theorem2} is complete.

In particular, the number of RBFs required to approximate the indicator function \(I_{\bm{x}\in I_{\bm{k}}}\) depends on the parameter \(\varepsilon\). Furthermore, both the number of cells \(I_{\bm{k}}\) and the choice of \(\varepsilon\) needed to ensure that \(\nu(\Omega\setminus\Omega_{\varepsilon})\) remains sufficiently small depend on the underlying probability measure \(\nu(\cdot)\).

\section{Settings and hyperparameters of numerical experiments}
\label{training_details}
We list the hyperparameters and settings for each example in Table~\ref{tab:setting}.

{\scriptsize \begin{table}[!ht]
\centering
\caption{\footnotesize 
Hyperparameter settings and neural network configurations used in the numerical experiments for each example. The trainable parameters of the hierarchical RBF-KAN shown in Fig.~\ref{fig:rbfnn} include the weights \(w_{i,j,k}\) and biases \(b_k^j\) associated with the linear layers, as well as the RBF coefficients \(\alpha_{i,j}^k\), scale parameters \(\beta_i^k\), and RBF centers \(c_{i,j}^k\). 
For the hierarchical RBF-SKAN shown in Fig.~\ref{fig:rbfsnn}, the trainable parameters of the first deterministic block are identical to those of the hierarchical RBF-KAN. The trainable parameters of the second stochastic block include the weights (\textit{e.g.} \(w_{i,j,k,r,s}^3\)), biases (\textit{e.g.} \(b_{r,s}^3\)), RBF coefficients \(\alpha_{i,j}^k\), lower bounds of the scale parameters \(a_{i,j,s}^k\), ranges of the scale parameters \(b_{i,j,s}^k-a_{i,j,s}^k\), and RBF centers \(c_{i,j}^k\).} 
{\scriptsize\begin{tabular}{lllll}
\toprule
 & Example~\ref{example1} & Example~\ref{example2}  & Example~\ref{example3}\\
\midrule
gradient descent method & Adam & Adam  & Adam\\
learning rate & 0.002  & 0.001  & 0.002\\
number of epochs &5000  & 10000  & 2000 \\
Number of training samples & 2000 & 30 & 2000\\
Number of testing sample points $N_1$ & 2000 & 30 & 1000\\
number of blocks & 3  &2 & 2 \\
$\ell$ (the number of RBFs to approximate each $\phi_{q, p}$)  & 4  &2 & 4  \\
$n_0$ (the number of outputs of each intermediate block)  & 8  &16 & 16  \\
initialization for NN parameters  & $\mathcal{N}(0, 0.1^2)$ & $\mathcal{N}(0, 0.1^2)$  & $\mathcal{N}(0, 0.01^2)$ & \\
\bottomrule
\end{tabular}}
\label{tab:setting}
\end{table}}

\newpage
\section{Sensitivity analysis on the hyperparameters in the hierarchical RBF-KAN}
\label{appendixB}
Here, we investigate the sensitivity of the proposed hierarchical RBF-KAN to the two hyperparameters $\ell$ and $n_0$ for learning the multivariate function in Eq.~\eqref{example1_model} from Example~\ref{example1}. The hierarchical RBF-KAN employed in this experiment consists of three blocks, identical to the architecture used in Example~\ref{example1}. Except for $\ell$ and $n_0$, all other training settings remain the same as those listed in Table~\ref{tab:setting}. 
As shown in Table~\ref{example1_sensitivity:tab1}, choosing a sufficiently small value of $n_0\leq 4$, \textit{i.e.}, the number of neurons in each intermediate block, leads to inaccurate reconstruction of Eq.~\eqref{example1_model}. In contrast, increasing $\ell$ from $1$ to $4$, where $\ell$ denotes the number of RBFs used to approximate each $\phi_{q,p}$ and $\Phi_q$, improves the reconstruction accuracy of Eq.~\eqref{example1_model}. 
However, Table~\ref{example1_sensitivity:tab2} shows that increasing either $n_0$ or $\ell$ also increases the computational runtime of the hierarchical RBF-KAN. Determining an optimal hyperparameter pair $(\ell, n_0)$ to balance computational cost and approximation accuracy remains an important direction for future research, as the optimal configuration is likely to depend on the specific problem under consideration.

{\tiny \begin{table}[!b]
\centering
\caption{Errors on the testing set obtained using the proposed three-block hierarchical RBF-KAN equipped with the ResNet technique for different choices of the hyperparameters \(\ell\) and \(n_0\).}
\label{tab:three_blocks}
\tiny
\begin{tabular}{ccccccc}
\toprule
Dimension of $\bm{x}$  & 1 & 2 & 3 & 4 & 5 & 6 \\
\midrule
$\ell=4, n_0=1$ & 9.7636 & 5.3604 & 0.2122 & 0.0696 & 0.0282 & 0.0481
 \\
$\ell=4, n_0=2$ & 7.9136 & 0.2691 & 0.3136 & 0.0413 & 0.0492 & 0.0313
 \\
 $\ell=4, n_0=4$ & 0.2080 & 0.1160 & 0.0197 & 0.0371 & 0.0164 & 0.0428 \\
$\ell=1, n_0=8$ & 0.0242 & 0.0182 & 0.0289 & 0.0290 & 0.0237 & 0.0313 \\
$\ell=2, n_0=8$  & 0.0217 & 0.0186 & 0.0251 & 0.0368 & 0.0313 & 0.0317 \\
$\ell=4, n_0=8$ & 0.0076 & 0.0120 & 0.0199 & 0.0229 & 0.0166 & 0.0329 \\
$\ell=8, n_0=8$ & 0.0132 & 0.0123 & 0.0145 & 0.0301 & 0.0281 & 0.0359 \\
\bottomrule
\end{tabular}
\label{example1_sensitivity:tab1}
\end{table}}

{\tiny \begin{table}[!b]
\centering
\caption{Training runtime of the proposed three-block hierarchical RBF-KAN equipped with the ResNet technique for different choices of the hyperparameters \(\ell\) and \(n_0\).}
\label{tab:three_blocks}
\tiny
\begin{tabular}{ccccccc}
\toprule
Dimension of $\bm{x}$  & 1 & 2 & 3 & 4 & 5 & 6 \\
\midrule
$\ell=4, n_0=1$ & 79.2 & 111.9 & 67.0 & 53.3 & 64.2 & 59.6
 \\
$\ell=4, n_0=2$ & 70.3 & 71.3 & 60.3 & 56.5 & 65.3 & 72.9
 \\
$\ell=4, n_0=4$ & 77.5 & 79.9 & 79.1 & 90.4 & 127.5 & 129.0 \\
 $\ell=1, n_0=8$ & 153.3 & 160.4 & 208.1 & 179.3 & 226.2 & 326.4 \\
$\ell=2, n_0=8$  & 196.3 & 212.5 & 242.4 & 237.1 & 292.9 & 357.2 \\
$\ell=4, n_0=8$ & 437.6 & 394.3 & 504.4 & 655.9 & 571.8& 566.8 \\
$\ell=8, n_0=8$ & 709.9 & 811.4 & 857.4 & 1044.2 & 956.6 & 994.2
 \\
\bottomrule
\end{tabular}
\label{example1_sensitivity:tab2}
\end{table}}


\end{document}